%% file: main.tex
\documentclass{article} % For LaTeX2e
\usepackage{iclr2026_conference,times}
\usepackage{amsmath}
\usepackage{graphicx}
\usepackage{amsthm}
\usepackage{wrapfig}
\input{math_commands.tex}
    
\usepackage{hyperref}
\usepackage{url}

\usepackage{algorithm}
\usepackage{algorithmic} 
\usepackage{booktabs}

\newcommand{\revision}[1]{{#1}}
\newenvironment{revisionblock}{}{}

\title{SHAPO: Sharpness-Aware Policy Optimization for Safe Exploration}

\author{Kaustubh Mani \thanks{Corresponding author: \texttt{kaustubh3095@gmail.com}} \\
Université de Montréal\\
Mila - Québec AI Institute
\And
Yann Pequignot \\
Université Laval \\
IID - Institut Intelligence et Données 
\And 
Vincent Mai\\
LawZero\\
\And
Liam Paull\\
Université de Montréal\\
Mila - Québec AI Institute\\
CIFAR AI Chair \\
}
\iclrfinalcopy % Uncomment for camera-ready version, but NOT for submission.
\begin{document}

\maketitle

\begin{abstract}

Safe exploration is a prerequisite for deploying reinforcement learning (RL) agents in safety-critical domains. In this paper, we approach safe exploration through the lens of epistemic uncertainty, where the actor’s sensitivity to parameter perturbations serves as a practical proxy for regions of high uncertainty. We propose  \textit{Sharpness-Aware Policy Optimization} (\method{}), a sharpness-aware policy update rule that evaluates gradients at perturbed parameters, making policy updates pessimistic with respect to the actor’s epistemic uncertainty. Analytically we show that this adjustment implicitly reweighs policy gradients, amplifying the influence of rare unsafe actions while tempering contributions from already safe ones, thereby biasing learning toward conservative behavior in under-explored regions. Across several continuous-control tasks, our method consistently improves both safety and task performance over existing baselines, significantly expanding their Pareto frontiers.
\end{abstract}

% Introduction 
\input{iclr2026/sections/introduction}
\input{iclr2026/sections/background}

% Preliminaries
\input{iclr2026/sections/preliminaries}

% Method
\input{iclr2026/sections/method_arxiv}

\input{iclr2026/sections/experiments}
% Results
\input{iclr2026/sections/results}

\input{iclr2026/sections/related_work}

\input{iclr2026/sections/conclusion}

% \newpage

\section*{Reproducibility Statement}

All code, configuration files, and exact hyperparameters required to reproduce our results are publicly available at \url{https://github.com/montrealrobotics/shapo}. Experiments were run in standard, publicly accessible environments (OpenAI Gym, Safety Gym), with version details documented in the repository.

\section*{Acknowledgements}
This work is supported by the DEEL Project CRDPJ 537462-18 funded by the Natural Sciences and Engineering Research Council of Canada (NSERC) and the Consortium for Research and Innovation in Aerospace in Québec (CRIAQ), together with its industrial partners Thales Canada inc, Bell Textron Canada Limited, CAE inc and Bombardier inc.\footnote{\url{https://deel.quebec}}

\bibliography{iclr2026_conference}
\bibliographystyle{iclr2026_conference}

\newpage
\appendix

\input{iclr2026/sections/appendix_arxiv}

\end{document}

%% file: math_commands.tex
\usepackage{amsmath,amsfonts,bm}

\def\eqref#1{equation~\ref{#1}}
\def\1{\bm{1}}

\DeclareMathAlphabet{\mathsfit}{\encodingdefault}{\sfdefault}{m}{sl}
\SetMathAlphabet{\mathsfit}{bold}{\encodingdefault}{\sfdefault}{bx}{n}

\DeclareMathOperator{\E}{\mathbb{E}}

\newcommand{\R}{\mathbb{R}}

\DeclareMathOperator*{\argmax}{arg\,max}
\DeclareMathOperator*{\argmin}{arg\,min}

\DeclareMathOperator{\sign}{sign}

\newcommand{\Fisher}[1]{\mathbf{F}_{#1}}      % Fisher information matrix
\newcommand{\method}[1]{\textit{SHAPO}}
\newtheorem{theorem}{Theorem}%[section]

\newtheorem{lemma}[theorem]{Lemma}
\newtheorem{proposition}[theorem]{Proposition}
\newcommand{\ERLag}[2]{J_{#1}(#2)}
\DeclareMathOperator*{\maximize}{\text{maximize}}

\DeclareMathOperator*{\minimize}{\text{minimize}}
\newcommand{\DKL}[1]{\overline{D}_{\text{KL}}^{#1}}

\newcommand{\DKLmax}{D^{\text{max}}_{\text{KL}}}

\newcommand{\deltaSAM}{\delta_\text{Down}}
\newcommand{\deltaUp}{\delta_\text{Up}}

\newcommand{\epsilonSAM}{\epsilon_\text{Down}}

\newcommand{\epsilonUp}{\epsilon_\text{Up}}

\newcommand{\MethodName}{\textit{SHAPO}}

\newcommand{\ALoss}[1]{L^\lambda_{#1}}
\newcommand{\Obj}[1]{L_{#1}}

\newcommand{\Adv}{A^0_\lambda}

%% file: iclr2026/sections/introduction.tex
\section{Introduction}

Deploying reinforcement learning (RL) agents in safety-critical environments poses unique challenges due to the need for \emph{safe exploration}: the process by which an agent collects informative experience while avoiding catastrophic failures \citep{csc, cpo, sauteRL}. A fundamental difficulty arises during early training, when the agent inevitably visits states about which it has limited knowledge, i.e., regions of \emph{high epistemic uncertainty}. In these regions, the predictions of learned policies and value functions are unreliable, and exploratory actions may cause unsafe or irreversible outcomes.

In practice, safe exploration depends on the policies an algorithm visits while learning the task, so the choice of update rule matters: each gradient step should hedge against unreliable estimates in parts of the state–action space visited under the current policy. This connects safe exploration to the agent’s epistemic uncertainty—uncertainty about model parameters that is largest where on-policy
data are scarce—suggesting that policy updates themselves should be pessimistic with respect to that epistemic uncertainty.

Prior work has largely handled uncertainty on the critic side (ensembles \citep{lakshminarayanan2017deepensembles,bootDQN,bykovets2022ppo_uncertainty}, dropout \citep{gal2016dropoutbayesian}, or distributional methods \citep{dist_rl, value_mbrl} combined with risk measures), which helps quantify variability in returns but does not expose a practical, optimization-ready notion of actor’s epistemic uncertainty. Maintaining an explicit posterior over policy parameters is
generally infeasible for deep on-policy RL, making actor-side risk aversion difficult in practice. As a result, while critic-side approaches are common, we observe a gap in methods for policy updates that carefully account for the actor’s parameter uncertainty.

We address this gap by introducing a novel approach to policy update guided by gradients evaluated at an adjusted parameter: \textit{Sharpness Aware Policy Optimization} (\MethodName{}\footnote{Code for \MethodName{} is available at \url{https://github.com/montrealrobotics/shapo}}). 
Our method builds upon sharpness-aware optimization~\citep{sam,kim2022fisher}, which incorporates the sharpness of the loss landscape to enhance generalization. We reinterpret this adjustment through the perspective of epistemic uncertainty, demonstrating that it induces a systematically pessimistic bias with respect to the actor’s performance. In a simplified analytical setting, we show that \MethodName{} modifies the treatment of low-probability actions: those deemed unsafe are assigned greater weight in the policy update, whereas those identified as safe are downweighted. Empirically, \MethodName{} expands the safety–efficiency Pareto frontier across a range of on-policy Safe RL baselines in the Safety-Gym \citep{safety-gym} and MuJoCo \citep{mujoco} environments, while simultaneously reducing cumulative failures and mitigating heavy-tailed episodic cost distributions. Ablation studies further indicate that perturbations defined in the Fisher metric are superior to their Euclidean counterparts, and that applying sharpness-aware optimization to the actor is more consequential for safe exploration than its application to the critic.

%% file: iclr2026/sections/background.tex
\section{Background}

\subsection{Constrained Markov Decision Processes}

A \emph{Constrained Markov Decision Process} (CMDP)~\citep{altman2021constrained} is a tuple
\[
\mathcal{M} = (\mathcal{S}, \mathcal{A}, T, r, c, \gamma, \mu_0),
\]
with state space $\mathcal{S}$, action space $\mathcal{A}$, transition kernel $T$, reward function $r:\mathcal{S}\!\times\!\mathcal{A}\to\R$, nonnegative cost function $c:\mathcal{S}\!\times\!\mathcal{A}\to\R_+$, discount factor $\gamma\in[0,1)$, and initial state distribution $\mu_0$.  
A stationary policy $\pi$ induces discounted returns
\[
J_r(\pi)=\E_\pi\!\Big[\sum_{t=0}^\infty \gamma^t r(s_t,a_t)\Big], 
\quad 
J_c(\pi)=\E_\pi\!\Big[\sum_{t=0}^\infty \gamma^t c(s_t,a_t)\Big],
\]
and the objective is
\begin{equation}
    \pi^* = \operatorname*{argmax}_{\pi \in \Pi_C} J_r(\pi), 
    \Pi_\mathcal{C} = \bigl\{\pi \in \Pi: J_{c}(\pi) \leq \beta \bigl\}.
\end{equation}
for a constraint threshold $\beta\geq0$.  
% Equivalently, one may optimize the Lagrangian
% \[
% \mathcal{L}(\pi,\lambda)=J_r(\pi)-\lambda(J_c(\pi)-d), \quad \lambda\ge 0.
% \]

 A common approach consists in introducing a Lagrangian objective,
\begin{equation}\label{eq:lag}
\ERLag{\lambda}{\pi_\theta} = J_r(\pi_\theta) - \lambda \big( J_c(\pi_\theta) - \beta \big),
\end{equation}
with multiplier $\lambda \geq 0$ adapted to enforce the constraint.  

We write
$
Q_\lambda^\pi(s,a)=E_{\tau \sim \pi}[R_\lambda(\tau)| s_0=s, a_0=a]$ for the action-value function associated with $\pi$, where $R_\lambda(\tau)=\sum_{t=0}^{\infty} \gamma^t \bigl(r(s_t,a_t)-\lambda c(s_t,a_t)\bigr)$. Similarly, we write 
$
V_\lambda^\pi(s)=E_{a\sim \pi(~|s)} [Q_\lambda^\pi(s,a)]$ for the value function. Finally, for a policy $\pi$, we write $d^\pi(s)$ for the discounted state distribution given by 
$d^\pi(s)=(1-\gamma)\sum_{t=0}^\infty \gamma^t Pr(s=s_t|\pi)$.

\subsection{Policy optimization}
\label{subsec:policy_optimization}
A standard way to learn policies that maximize expected return consists of  iteratively improving a policy until convergence to optimum. We provide here some notations and results from \cite{kakade2002approximately,pmlr-v37-schulman15}. Given a current policy $\pi_0$ we seek to find a policy $\pi$ that improves on $\pi_0$. The improvement of a policy $\pi$ over $\pi_0$ can be expressed as:%\citep{kakade2002approximately}:
$$
\ERLag{\lambda}{\pi}-\ERLag{\lambda}{\pi_0}=\E_{s\sim d^\pi}\E_{a\sim \pi(a|s)} \Adv(s,a)
$$
where $\Adv(s,a)=Q_\lambda^{\pi_0}(s,a)-V_\lambda^{\pi_0}(s)$ quantifies the advantage of taking action $a$ compared to the average action under policy $\pi_0$.

This improvement is an expectation over the state distribution of the unknown policy $\pi$, so, in practice, it is replaced by the the following quantity that can be estimated using only experience collected with the current $\pi_0$:
\begin{equation}
\ALoss{\pi_0}(\pi)=\E_{s\sim d^{\pi_0}}\E_{a\sim \pi_0(a|s)} \frac{\Adv(s,a)}{\pi_0(a|s)}\pi(a|s).
\label{ObjectiveFunction}
\end{equation}
However in order for a positive $\ALoss{\pi_0}(\pi)$ to effectively represent an improvement, we need to ensure that $\pi$ does not differ too much from $\pi_0$ . As a consequence, to generate an updated policy $\pi$, the natural gradient is found as the solution to the following optimization problem:
\begin{align}\label{TRPO_abs}
\begin{split}
    \maximize_{\pi} ~&  \ALoss{\pi_0}(\pi)\\
    \text{subject to}~ & \DKLmax(\pi_0,\pi)\leq \delta 
\end{split}
\end{align}
where $\DKLmax(\pi_0,\pi)=\max_s \text{KL}(\pi_0(\cdot|s)\Vert \pi(\cdot|s))$ is the maximum of Kullback-Leibler divergence over states and $\delta>0$ specifies the trust region constraint.

We recall the following result adapted from \citep[Theorem 1]{pmlr-v37-schulman15} that bounds the improvement (or deterioration) of a policy $\pi$ over the current policy $\pi_0$ in terms of $\ALoss{\pi_0}(\pi)$:

\begin{theorem}\label{DeviationBound}
For $\alpha=\DKLmax(\pi_0,\pi)$, we have 
\[
\ALoss{\pi_0}(\pi)-B_\alpha \leq \ERLag{\lambda}{\pi}-\ERLag{\lambda}{\pi_0}\leq \ALoss{\pi_0}(\pi)+B_\alpha \quad \text{with} \quad B_\alpha=-\alpha\frac{4\gamma}{(1-\gamma)^2}\max_{s,a}|\Adv(s,a)|.
\]
% where $B_\alpha=-\alpha\frac{4\gamma}{(1-\gamma)^2}\max_{s,a}|\Adv(s,a)|$.
\end{theorem}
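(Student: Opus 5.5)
The plan is to follow the coupling argument of \citet{pmlr-v37-schulman15}, which proves the bound first in terms of total variation and then converts it to KL via Pinsker's inequality. Two conventions need fixing first. The constant $B_\alpha$ as printed carries a minus sign; for a two-sided bound to be meaningful it has to be read as its absolute value $\alpha\frac{4\gamma}{(1-\gamma)^2}\max_{s,a}|\Adv(s,a)|$, and that is the quantity I will bound. The improvement identity will be used in its unnormalized form
\[
\ERLag{\lambda}{\pi}-\ERLag{\lambda}{\pi_0}=\sum_{t\ge 0}\gamma^t\,\E_{s_t\sim\pi}\big[\bar A(s_t)\big],
\qquad \bar A(s)=\E_{a\sim\pi(\cdot|s)}\Adv(s,a),
\]
where the $(1-\gamma)$ factor in $d^\pi$ is absorbed into the sum. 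In the same way, $\ALoss{\pi_0}(\pi)=\sum_t\gamma^t\,\E_{s_t\sim\pi_0}[\bar A(s_t)]$, since the importance ratio turns $\E_{a\sim\pi_0}\frac{\pi(a|s)}{\pi_0(a|s)}\Adv(s,a)$ into $\bar A(s)$. The two quantities therefore differ only in which policy generates the states $s_t$, so it suffices to bound
\[
\Big|\sum_t\gamma^t\big(\E_{s_t\sim\pi}\bar A(s_t)-\E_{s_t\sim\pi_0}\bar A(s_t)\big)\Big|.
\]

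The first step bounds $\bar A$ pointwise. Let $a=\max_s D_{TV}(\pi_0(\cdot|s),\pi(\cdot|s))$ and $\epsilon=\max_{s,a}|\Adv(s,a)|$. Since $\E_{a\sim\pi_0}\Adv(s,a)=0$, we can write $\bar A(s)=\E_{\pi}\Adv-\E_{\pi_0}\Adv$. Coupling $\pi(\cdot|s)$ and $\pi_0(\cdot|s)$ maximally, so that the two actions disagree with probability at most $a$, gives $|\bar A(s)|\le 2a\epsilon$.

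The second step controls the state distributions. Couple the two trajectories so that both policies use the maximal coupling at every step. Then the probability that the actions have agreed for all steps before $t$ is at least $(1-a)^t$. On that event $s_t$ has the same law under $\pi$ and $\pi_0$, so the contributions cancel. On the complementary event each term is at most $2a\epsilon$ in absolute value, so the per-step discrepancy is at most $2\cdot 2a\epsilon\,(1-(1-a)^t)$. Summing over $t$ gives
\[
\sum_t\gamma^t\, 4a\epsilon\big(1-(1-a)^t\big)
=4a\epsilon\Big(\frac1{1-\gamma}-\frac1{1-\gamma(1-a)}\Big)
=\frac{4\epsilon\gamma a^2}{(1-\gamma)(1-\gamma(1-a))}
\le\frac{4\epsilon\gamma a^2}{(1-\gamma)^2}.
\]

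The final step applies Pinsker's inequality, $D_{TV}^2\le \tfrac12\mathrm{KL}\le \mathrm{KL}$, statewise. This gives $a^2\le\DKLmax(\pi_0,\pi)=\alpha$, and hence both the upper and the lower deviation bound with $|B_\alpha|$.

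I expect the main obstacle to be bookkeeping rather than any single estimate. The normalization of $d^\pi$ in the paper's improvement identity has to be matched against the surrogate $\ALoss{\pi_0}$; if it is applied literally, an extra factor $1/(1-\gamma)$ would scale both sides consistently. The sign of $B_\alpha$ also has to be handled as described above. The coupling inequality itself is standard, and I would take the construction from \citet{pmlr-v37-schulman15}: define the coupled pair of policies and apply the union bound over time steps.
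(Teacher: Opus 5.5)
Your proposal is correct and follows essentially the same route as the paper, which gives no proof of its own but imports Theorem 1 of \cite{pmlr-v37-schulman15}; that proof is exactly your combination of the maximal-coupling bound $|\bar A(s)|\le 2a\epsilon$, the $(1-a)^t$ agreement estimate for coupled trajectories summed over $t$, and Pinsker's inequality to pass from total variation to $\alpha=\DKLmax(\pi_0,\pi)$. Both of your bookkeeping fixes are needed for the printed statement to be meaningful: reading $B_\alpha$ as its absolute value, and putting the improvement identity and $\ALoss{\pi_0}$ on the same $(1-\gamma)$-normalization.
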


% Intuitively, this result states that the further is $\pi$ away from $\pi_0$ (in the $\DKLmax$ sense) the more $L_{\pi_0}(\pi)$ needs to be away from $0=L_{\pi_0}(\pi_0)$ to garantee a effective improvement (or deterioration).  
Intuitively, this result indicates that the further $\pi$ is from $\pi_0$ (in the sense of $\DKLmax$), the more $\ALoss{\pi_0}(\pi)$ must deviate from $0=\ALoss{\pi_0}(\pi_0)$\footnote{Note that $\ALoss{\pi_0}(\pi_0)=\E_{s\sim d^{\pi_0}}\E_{a\sim \pi_0(~|s)} \Adv(s,a)=0$ since $\Adv(s,a)=Q_\lambda^0(s,a)-\E_{a\sim \pi_0(~|s)}[Q^0_\lambda(s,a)]$ }  to ensure a true improvement (or deterioration).

% LP  notes:
% if we have a low advantage in a rare action - we take it seriously
% if we have a  high advantage in a rare action - we wait and see

We now consider a parametrized policy $\pi_\theta$, let $\pi_0=\pi_{\theta_0}$ and overload our previous notation by using $\theta$ instead of $\pi_\theta$ when there is no risk of confusion. Since the maximum in $\DKLmax(\pi_0,\pi_\theta)$ is intractable, the expectation $\DKL{\pi_0}(\pi_0,\pi_\theta)=\mathbb{E}_{s\sim d^{\pi_0}} \text{KL}(\pi_0(\cdot|s)\Vert \pi_{\theta}(\cdot|s))$ is used in practice. The second-order approximation of this function of $\theta$ around $\theta_0$ is given by $\DKL{\pi_0}(\pi_0 \Vert \pi_{\theta_0+\epsilon})\approx\frac{1}{2} \epsilon^T\Fisher{\theta_0}\epsilon$ where $\Fisher{\theta_0}$ denotes the Fisher Information Matrix %whose entries are given by $(\Fisher{\theta_0})_{i,j}=\mathbb{E}_{s\sim d^{0}}\frac{\partial}{\partial \theta_i}\log \pi_{\theta}(~|s)\vert_{\theta=\theta_0} \frac{\partial}{\partial \theta_j} \log \pi_{\theta}(~|s)\vert_{\theta=\theta_0}$ or using matrix multiplication
defined by $\Fisher{\theta_0}=\mathbb{E}_{s\sim d^{\pi_0}} \E_{a\sim \pi_0(a|s)}(\nabla _\theta \log \pi_\theta(a|s)\vert_{\theta=\theta_0})(\nabla _\theta \log \pi_\theta(a|s)\vert_{\theta=\theta_0})^T$ \cite[p. 26]{kullback1997information}. 
% Now, using the first-order approximation of the loss function $\ALoss{\pi_0}(\theta)$ around $\theta_0$ leads to the following optimization problem:
Now, using a linear approximation $\ALoss{\pi_0}(\theta)=g^T\theta$ of the loss function around $\theta_0$ leads to the following optimization problem:
% \begin{align}\label{TRPO} \begin{split}   \maximize_{\theta} ~&  \langle \nabla_\theta L_{\pi_0}(\theta) \vert_{\theta=\theta_0}, (\theta - \theta_0)\rangle \\ 
%     \text{subject to}~ & \frac{1}{2} (\theta - \theta_0)^T \Fisher(\theta_0) (\theta - \theta_0)\leq \delta 
%     \end{split}
% \end{align}
% \begin{align}\label{TRPO}
% U_{\theta_0}(g,\delta):\Biggl\{~
% \begin{split}   \maximize_{\epsilon } ~&  \langle g, \epsilon\rangle \\ 
%     \text{subject to}~ & \frac{1}{2} \epsilon^T \Fisher{\theta_0} \epsilon\leq \delta 
%     \end{split}
% \end{align}
\begin{align}\label{TRPO}
% U_{\theta_0}(g,\delta)= \argmax_{\frac{1}{2} \epsilon^T \Fisher{\theta_0} \epsilon\leq \delta } \langle g, \epsilon\rangle.
U_{\Fisher{\theta_0}}(g,\delta)= \argmax_{\frac{1}{2} \epsilon^T \Fisher{\theta_0} \epsilon\leq \delta } \langle g, \epsilon\rangle.
\end{align}
    %whose solution for $g= \nabla_\theta \ALoss{\pi_0}(\theta) \vert_{\theta=\theta_0}$ yields the desired parameter update. 
    TRPO \citep{pmlr-v37-schulman15} uses $g= \nabla_\theta \ALoss{\pi_0}(\theta) \vert_{\theta=\theta_0}$ for the parameter update. 
    We observe that relying on the gradient $g$ of the loss $\ALoss{\pi_0}$ may result in an overconfident update, as it does not take into account the epistemic uncertainty associated with $\pi_0$. %\liam{It is important to note that this $g$ is independent of the uncertainty or variability around $\theta_0$. Is it because it's a mean? In what follows, we show how we can use the SAM objective to instead bias the gradient to a certain percentile... not sure exactly how to say this but I think there's an opportunity here to help glue things together }

\subsection{Sharpness-Aware Optimization}

Sharpness-Aware Minimization (SAM)~\citep{sam} is a general framework that aims to bias solutions towards flat regions of the loss or utility landscape. In general terms, for a utility function $\mathcal{L}(\pi)$ that we seek to maximize for $\pi\in\Pi$, SAM framework proposes the following modified optimization problem:
\[
\max_{\pi\in\Pi} \; \min_{\tilde{\pi}\in N(\pi)} \; \mathcal{L}(\tilde{\pi}),
\]
where $N(\pi)$ is some choice of neighborhood of $\pi$. 
% \[
% \max_{\theta} \; \min_{\Vert\delta\Vert \leq \rho} \; \mathcal{L}(\theta + \delta),
% \]
Given a parametrization $\pi_\theta$ with $\theta\in \Theta$, Euclidean neighborhoods in parameter space 
%$N^\text{Euclid}_\gamma(\pi_\theta)=\{\tilde{\theta}: \Vert \theta- \tilde{\theta}\Vert_2\leq \gamma\}$, for some $\gamma\geq 0$,
represent a simple choice that was studied by \cite{sam} and has also applied to RL ~\citep{flat_rl}. 
%While this idea has been explored in the context of reinforcement learning for Robust RL, 
Other neighborhoods based on functional similarity have also been proposed in the supervised learning setting. Fisher-SAM~\citep{kim2022fisher} proposes to define the neighborhood of a predictive distribution $\pi:x \mapsto \pi(\cdot|x)$ in terms of expected Kullback-Leibler divergence $N^\text{KL}_\delta(\pi)=\bigl\{\tilde{\pi}: \mathbb{E}_x \bigl[\text{KL}\bigl(\pi(\cdot|x)\Vert \tilde{\pi}(\cdot|x)\bigr)\bigr]\leq \delta\bigr\}$ for some $\delta\geq 0$. In practice, maximization of this worst-case value in a neighborhood is achieved by successive ascent on parameter $\theta$ following the gradient of the loss at $\mathcal{L}(\theta+\epsilon)$ where $\epsilon$ is such that $\pi_{\theta+\epsilon}\approx \argmin_{\tilde{\pi}\in N(\pi)}\mathcal{L}(\tilde{\pi})$ (see Alg.~\ref{alg:SAO} in appendix). In the next section, we adapt the sharpness-aware optimization approach for policy optimization. Furthermore, we show that this method is equivalent to optimizing a risk-aware objective in light of the epistemic uncertainty surrounding the current policy.

% Maybe add formulas for approximate solutions to the inner-minimization problems for Euclidean and KL neighborhoods. $\epsilon_2=\gamma\frac{g}{\Vert g \Vert_2}$ and $\epsilon$ 
% of neighborhood  This idea has been studied   is chosen to be 
% for some (semi) norm $\Vert \cdot \Vert$. While the Euclidean norm on parameters has been explored in RL~\citep{flat_rl}, other norms like $\Vert \delta \Vert_{F(\theta)}= \sqrt{\delta^TF(\theta)\delta}$ for the Fisher information matrix $F(\theta)$ have been proposed in the context of supervised learning.

% Under standard SAM \citep{flat_rl}, the Safe RL objective around $\pi_0$ at $\theta_0$ becomes
% \[
% \max_{\theta} \; \min_{\|\delta\|_2 \leq \rho} \; \mathcal{L}_{\pi_0}(\theta + \delta),
% \]
% where $\rho > 0$ specifies the size of the perturbation neighborhood. The inner maximization identifies a worst-case perturbation of the parameters, while the outer minimization seeks parameters that remain robust under such perturbations. By construction, SAM favors solutions with low sensitivity to local parameter changes, thereby promoting generalization and stability.  

% \textcolor{red}{Maybe write as max min for a reward function. Explain the simplest implementation of SAM from the original paper.}

% \subsection{Sharpness-aware policy optimization}

% update the policy so as to maximize instead 
% $$
% \min_{\pi'\in N(\pi_0)(\tilde{\pi},\pi)\leq \delta}\ERLag{\lambda}{\tilde{\pi}} 
% $$

% Find $\pi$ that maximizes :
% $$
% \min_{\DKL{\pi}(\tilde{\pi},\pi)\leq \delta}\ERLag{\lambda}{\tilde{\pi}} 
% $$

%% file: iclr2026/sections/preliminaries.tex
\begin{revisionblock}
\section{Safe Exploration}
\end{revisionblock}

% \subsection{Safe Exploration}
\label{subsec:safe_exploration}
% On-policy deep RL algorithms designed to address the CMDP problem begin with a random policy \(\pi_0\) and iteratively update to policies \(\pi_1, \pi_2, \pi_3, \ldots, \pi_{\text{last}}\). The goal is to maximize \(\ERLag{r}{\pi_{\text{last}}}\) while ensuring that the constraint \(\ERLag{c}{\pi_{\text{last}}} \leq d\) is satisfied. However, in this process, these algorithms also incur a total cost proportional to \(\sum_{i=1}^{\text{last}} J_c(\pi_i)\). The primary objective of safe exploration can be formulated as minimizing total cost while simultaneously learning a policy that yields a high expected reward.
On-policy deep RL algorithms for CMDPs begin with a random policy \(\pi_0\) and iteratively update to \(\pi_1, \pi_2, \ldots, \pi_{\text{last}}\). The objective is to maximize \(\ERLag{r}{\pi_{\text{last}}}\) subject to \(\ERLag{c}{\pi_{\text{last}}} \leq \beta\). However, during this process the agent incurs a cumulative training cost proportional to \(\sum_{i=1}^{\text{last}} J_c(\pi_i)\). The \emph{primary objective} of safe exploration is therefore to minimize this cumulative cost while still learning a policy that yields high expected reward.

A complementary objective is to avoid policies \(\pi_i\) that, even if low-cost in expectation, occasionally produce trajectories with very high cost. Writing the discounted trajectory cost as \(\ERLag{c}{\tau}=\sum_{t=0}^{\infty}\gamma^t c(s_t,a_t)\), the goal is to control the tail of the distribution of \(\ERLag{c}{\tau}\), thereby reducing the frequency of rare but catastrophic failures.

We observe that these two objectives extend beyond merely finding a solution to the CMDP. They emphasize not only the importance of maximizing expected rewards while adhering to cost constraints but also the critical need for the policies explored during training to avoid rare high-cost outcomes. In practice, selecting different values of \(\lambda\) in the equation \ref{eq:lag} results in a trade-off between expected return and total cost. The focus of this work is on safe exploration, which involves advancing this Pareto front.

% On-policy Deep RL algorithms that attempts to solve the CMDP problem start from a random policy $\pi_0$, and make updates, visiting policies $\pi_1,\pi_2, \pi_3, \ldots, \pi_\text{last}$ with the objective that $\ERLag{r}{\pi_\text{last}}$ is maximal under the constraint $\ERLag{c}{\pi_\text{last}}\leq d$. By doing so, these algorithms incur a total cost that is proportional $\sum_{i=1}^\text{last}J_c(\pi_i)$. The first objective of safe exploration is to achieve a low total cost while learning a policy with high expected reward.  
% We identify a second objective that consists of visiting policies $\pi_i$ for which not only achieve low cost in expectation, but also rarely achieve high cost. Writing $\ERLag{c}{\tau} =\left[ \sum_{t=0}^{\infty} \gamma^t c(s_t,a_t) \right]$ for the discounted cost of trajectory $\tau$, we aim for policies $\pi$ whose distribution $\ERLag{c}{\tau}$, $\tau\sim\pi$, does not have a large tail.

Policy updates are crucial to the effectiveness of these methods. Given the inherent uncertainties associated with these updates, adopting a pessimistic approach—especially in the context of the agent's epistemic uncertainty—emerges as a logical strategy for achieving safe exploration, which we implement in this work. Specifically, we believe that a safe exploration framework must effectively address rare and potentially unsafe events during the policy update process. By prioritizing this consideration, we aim to enhance both the reliability and safety of the learning process and its outcomes.

% \end{revisionblock}

% These objectives extend beyond merely solving the CMDP: they require not only maximizing expected reward under cost constraints, but also preventing policies encountered during training from producing rare high-cost outcomes. Writing the discounted trajectory cost as \(\ERLag{c}{\tau}=\sum_{t=0}^{\infty}\gamma^t c(s_t,a_t)\), safe exploration requires controlling the tail of the distribution of \(\ERLag{c}{\tau}\) for trajectories \(\tau\sim\pi\). This challenge arises directly from the epistemic uncertainty of the \emph{actor}: when policy parameters are poorly identified, updates can induce large and unsafe variations in behaviour. Safe exploration therefore demands update rules that are explicitly risk-averse with respect to actor uncertainty.

% Policy update is the core of these methods. Given all the uncertainty in which this update is made, being risk averse in particular in face of epistemic uncertainty seems a natural way to achieve safe exploration.

%% file: iclr2026/sections/method_arxiv.tex
%\section{Risk Aware Local Policy Update}

\section{Sharpness aware policy optimization (\MethodName{})}
\label{sec:method}
In the context of policy optimization, SAM framework~\citep{sam} suggests to look for a policy $\pi$ that maximizes 
\begin{equation}\label{SHAPO_obj}
\min_{\tilde{\pi}\in N(\pi)}\ERLag{\lambda}{\tilde{\pi}} 
\end{equation}
instead of just $\ERLag{\lambda}{\pi}$. 
This shift in objectives can be intuitively justified at high level as follows. We may prefer a policy \(\pi_1\) over \(\pi_2\) even if \(\ERLag{\lambda}{\pi_1} < \ERLag{\lambda}{\pi_2}\) if \(\pi_1\) achieves a higher value for the worst-case objective~\ref{SHAPO_obj}. 
This preference may arise from the understanding that the expected return of policies within the neighborhood \(N(\pi_1)\) can provide a more accurate estimate of the actual performance of \(\pi_1\). This is particularly relevant because our learning problem may not fully encapsulate real-world complexities, such as environmental dynamics or the real-world implementation of our policy. By focusing on the worst-case expected return of nearby policies, we aim to better align our estimates with the true performance outcomes in realistic settings.

\subsection{The policy update}

As outlined in the background section, our goal is to learn a policy \(\pi\) that maximizes \(\ERLag{\lambda}{\pi}\) by gradually updating the current policy \(\pi_{\theta_0}\). This is achieved by solving the optimization problem \ref{TRPO} under a trust region constraint \(\deltaUp\) and using the gradient \(g = \nabla_\theta \ALoss{\pi_0}(\theta) \big|_{\theta = \theta_0}\). 
We present here our proposed approach, which differs primarily in that it substitutes the direction \(g\) with
%\nabla_\theta L_{\pi_0}(\theta) \big|_{\theta = \theta_0 + \epsilon_{\text{SAM}}}\), 
the gradient \(\tilde{g}\) of $\ALoss{\pi_0}$ computed at $\theta_0+\epsilonSAM$ where $\epsilonSAM$ is a carefully chosen adjustment.

Given the current policy $\pi_0$, we now outline our proposed method for updating it within the framework of SAM.
 We have at our disposal the utility function $\ALoss{\pi_0}(\theta)$ which estimates the relative improvement (or deterioration) of nearby policies $\pi_\theta$. Our first step therefore consists of solving the minimization problem
\begin{equation}\label{InnerMinAbstract}
\min_{\tilde{\pi}\in N(\pi_0)}\ALoss{\pi_0}({\tilde{\pi}}).    
\end{equation}
We note that choosing an isotropic Euclidean neighborhood in parameter space for $ N(\pi_0) $ presents certain challenges. A small distance in parameter space ($ \|\theta - \theta_0\|_2$ small) does not necessarily guarantee that $ \pi_\theta $ is similar to $ \pi_0 $ in terms of $ \DKLmax $. Consequently, when $ \ALoss{\pi_0}(\theta) < 0 $, it does not ensure that $ \ERLag{\lambda}{\pi_\theta} < \ERLag{\lambda}{\pi_0} $. In contrast, while selecting a neighborhood of distributionally similar policies better reflects the underlying geometry of the statistical manifold of the parameters~\citep{kim2022fisher}, it also contributes to ensure the validity of utility function $\ALoss{\pi_0}$ in virtue of Theorem~\ref{DeviationBound}.

Consequently, to identify a policy $ \tilde{\pi} $ that is distributionally close to $ \pi_{0} $ but performs worse than $ \pi_{0} $, we propose to formulate the inner minimization \ref{InnerMinAbstract} in a manner analogous to the optimization problem \ref{TRPO_abs}, namely as 
\begin{align}\label{SAM_TRPO_abs}
\begin{split}
    \minimize_{\tilde{\pi}} ~&  \ALoss{\pi_0}(\tilde{\pi})\\
    \text{subject to}~ & \DKLmax(\pi_0,\tilde{\pi})\leq \deltaSAM 
\end{split}
\end{align}
for some trust region constraint $\deltaSAM\geq 0$. We therefore estimate the solution to the inner minimization with the policy $\tilde{\pi}=\pi_{\theta_0+\epsilonSAM}$ where the perturbation $\epsilonSAM=U_{\Fisher{\theta_0}}(\textcolor{red}{-}\nabla_\theta \ALoss{\pi_0}(\theta) \vert_{\theta=\textcolor{red}{\theta_0}}, \deltaSAM)$ is the solution to the optimization problem~\ref{TRPO} where $\Fisher{\theta_0}$ is the Fisher information matrix of our policy parametrization evaluated at $\theta_0$.

Next we proceed to update $\pi_0$ based on the direction provided by the gradient of $\ALoss{\pi_0}$ at our found perturbed parameter $\theta_0+\epsilonSAM$. This step involves solving the maximization problem \ref{TRPO}, where the gradient $\nabla_\theta \ALoss{\pi_0}(\theta)$ is now evaluated at $\theta_0+\epsilonSAM$. This results in a parameter update given by $\epsilonUp=U_{\Fisher{\theta_0}}(\nabla_\theta \ALoss{\pi_0}(\theta) \vert_{\theta=\textcolor{black}{\theta_0+\epsilonSAM}}, \deltaUp)$ where $\deltaUp\geq 0$ is the trust region constraint. The pseudo-code for computing the \MethodName{} gradients is given in Algorithm \ref{SHAPO}.

\begin{algorithm}[t]
    \caption{\MethodName{}$(\theta,\Obj{},F,\deltaSAM)$}
    \begin{algorithmic}[1]
        \REQUIRE %Policy parametrization $\pi_\theta$,
        parameter $\theta$,
        objective function $\Obj{}$, Fisher matrix $F$, constraint $\deltaSAM$      
        % \STATE Given the SAM constraint $\deltaSAM$ and the optimization objective $\Obj{\pi_\theta}$
        \STATE Compute the gradient at $\theta$: $ g \gets \nabla_\theta \Obj{}(\theta)\vert_{\theta} $
        % \STATE Compute Fisher information matrix: $ F \gets \nabla_\theta \nabla_\theta L_{\pi_\theta}(\theta) $
        \STATE Compute the perturbation: $\boldsymbol{\epsilonSAM}=U_F(\boldsymbol{-}g,\deltaSAM)$%$ \epsilon_{\text{SAM}} \gets F^{-1} g $
        \STATE Compute the gradient at perturbed parameter: $\boldsymbol{\tilde{g}} \gets  \nabla_\theta \Obj{} (\theta) \vert_{\theta + \boldsymbol{\epsilonSAM}} $
        \STATE Return \MethodName{} gradient $\boldsymbol{\tilde{g}}$
    \end{algorithmic}
    \label{SHAPO}
\end{algorithm}

\subsection{Reinterpreting Fisher SAM as Pessimism in Face of Epistemic Uncertainty}
\label{sec:sam_risk}

In effect, our proposed approach differs from classical local optimization techniques in that the update of the current parameter $\theta_0$ is guided by the gradient evaluated at $\theta_0+\epsilonSAM$ instead of $\theta_0$. We now provide a different perspective on the perturbation $\epsilonSAM$, arising from a pessimistic viewpoint in the context of epistemic uncertainty.

We propose to model the uncertainty about the parameter $\theta_0$, which defines the current policy $\pi_0=\pi_{\theta_0}$ using a multivariate normal distribution $Q(\theta)=\mathcal{N}(\theta_0, \Sigma)$ with mean $\theta_0$ and covariance matrix $\Sigma=\frac{1}{n}\Fisher{}^{-1}$ where $\Fisher{}^{-1}$ denotes the inverse of the Fisher Information Matrix $\Fisher{}=\Fisher{\theta_0}$ evaluated at $\theta_0$. 
The choice to model epistemic uncertainty this way is motivated by the asymptotic properties established by the Bernstein--von Mises Theorem \citep{van1998asymptotic} \revision{where $n$ represents the number of independent samples used to estimate the parameters of the model. However, it should be noted that in the case of RL this not simple to compute due to the correlation of samples, as we will elaborate on in Section \ref{subsec:practical}}. This approach captures the intuitive notion that as we gather more data, our uncertainty about the parameter decreases, and the distribution of our estimates becomes more concentrated around the true value. This choice not only aligns with theoretical foundations but also offers a practical starting point for exploring the implications of epistemic uncertainty in the context of policy optimization.

We observe that this uncertainty about the current parameter $\theta_0$ translates into an uncertainty about the expected return $\ERLag{\lambda}{\pi_{\theta}}$ of our current policy, which we can estimate as
$$\ERLag{\lambda}{\pi_{\theta}}\approx \ERLag{\lambda}{\pi_{\theta_0}}+ \ALoss{\pi_0}(\theta)$$ when $\pi_\theta$ is close to $\pi_0$ in distribution.
% This allows to view the random variable $L_{\pi_0}(\theta)$, $\theta\sim Q(\theta)$ as the deviation of expected return. 
Writing $g=\nabla_\theta \ALoss{\pi_\theta}(\theta)\vert_{\theta=\theta_0}$ and assuming a first-order approximation $\ALoss{\pi_0}(\theta+\epsilon)=\epsilon^T g$ around $\theta_0$, the deviation of expected return is given by $Y=\ALoss{\pi_0}(\theta), \theta\sim\mathcal{N}(\theta_0, \frac{1}{n}\Fisher{}^{-1})$ that follows a normal distribution $\mathcal{N}(0,\sigma^2)$ with variance $\sigma^2=\frac{1}{n}g^T \Fisher{}^{-1}g$. 

%While $L_{\pi_0}(\theta_0)=0$ is the mean of this distribution, parameters such that $L_{\pi_0}(\theta)$ fall at, for example, at the $5\th$ percentile of the distribution $Z$ represents a more nuanced view of the distribution, emphasizing that while the mean may suggest a certain level of performance, there are credible scenarios where performance could be significantly lower.

While $\ALoss{\pi_0}(\theta_0) = 0$ represents the mean of this distribution, parameters for which $\ALoss{\pi_0}(\theta)$ falls at, for instance, the 5th percentile of the distribution $Y$ provide a more nuanced perspective on the current expected return. This highlights that, although the mean may indicate a certain level of performance, there are credible scenarios in which performance could be significantly lower. Therefore, adopting a pessimistic standpoint, it is justified to prioritize improving a more conservative expected return when updating $\theta_0$. We demonstrate that evaluating the gradient at $\theta_0 + \epsilonSAM$ to inform the update of $\theta_0$ can be interpreted as optimizing an estimate of the expected return that is pessimistic in face of the parameter uncertainty.

For $\alpha \in (0, \frac{1}{2})$, let $y_\alpha$ denote the $\alpha$-quantile of the random variable $Y$, and $z_\alpha$ denote the $\alpha$-quantile of the normal distribution $\mathcal{N}(0,1)$. %which represents the deviation in expected return due to parameter uncertainty. 
Note that under our first-order approximation of $\ALoss{\pi_0}$ the condition $\ALoss{\pi_0}(\theta_0+\epsilon)\leq y_\alpha$ corresponds to $\epsilon^Tg\leq y_\alpha$.
The following proposition links this epistemic uncertainty perspective with the parameter perturbation described in the previous section. It is illustrated in Fig.~\ref{fig:DualFisher}.

\begin{proposition}\label{prop:SHAPOdual}
    Let $Q(\theta)=\mathcal{N}(\theta_0, \frac{1}{n}\Fisher{}^{-1})$. For every $\alpha\in(0,\frac{1}{2})$, the solution $\epsilon_\alpha$ to the optimization problem
    \begin{align}\label{Epistemic_eps}
\begin{split}
    \maximize_{\epsilon} ~&  \log Q(\theta_0+\epsilon)\\
        \text{subject to}~ & \epsilon^Tg \leq y_\alpha %\ALoss{\pi_0}(\theta_0+\epsilon) \leq y_\alpha
    % \text{subject to}~ & P_Q(\{\theta| L_{\pi_0}(\theta)\leq L_{\pi_0}(\theta_0+\epsilon))\leq \alpha
\end{split}
\end{align}
coincides with the adjustment we make in \MethodName{} for the trust constraint $\deltaSAM=\frac{z_\alpha^2}{2n}$, namely $\epsilon_\alpha= U_{\Fisher{}}(-g,\frac{z_\alpha^2}{2n})$.% for $\deltaSAM=\frac{z_\alpha^2}{n}$,  . 
    % coincides with $\epsilonSAM= U_{\theta_0}(g,\deltaSAM)$ for $\deltaSAM=z_\alpha\frac{1}{\sqrt{n}}\Vert g\Vert_{\Fisher{}^{-1}}$ where $z_\alpha$ is the quantile of $\alpha$ for $\mathcal{N}(0,1)$.
\end{proposition}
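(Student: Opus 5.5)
Both optimization problems can be solved in closed form and then compared directly.

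\emph{Step 1: rewrite the objective.} Since $Q=\mathcal{N}(\theta_0,\frac{1}{n}\Fisher{}^{-1})$, we have
\[
\log Q(\theta_0+\epsilon) = C - \frac{n}{2}\epsilon^T\Fisher{}\epsilon
\]
for a constant $C$. So problem~\ref{Epistemic_eps} is equivalent to minimizing $\epsilon^T\Fisher{}\epsilon$ subject to $\epsilon^Tg\le y_\alpha$. We assume $\Fisher{}$ is positive definite and $g\neq 0$.

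\emph{Step 2: compute the quantile.} Under the first-order model, $Y\sim\mathcal{N}(0,\sigma^2)$ with $\sigma=\sqrt{g^T\Fisher{}^{-1}g/n}$, so $y_\alpha=\sigma z_\alpha$. Because $\alpha<\frac12$, $z_\alpha<0$ and hence $y_\alpha<0$. This matters: $\epsilon=0$ is infeasible, so the constraint is active at the optimum.

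\emph{Step 3: solve the epistemic problem.} Minimizing a strictly convex quadratic under one active linear constraint, the Lagrange (KKT) condition gives $n\Fisher{}\epsilon=-\mu g$ with $\mu\ge 0$, so $\epsilon=-c\,\Fisher{}^{-1}g$ for some $c>0$. Imposing $\epsilon^Tg=y_\alpha$ gives $c=-\sigma z_\alpha/(g^T\Fisher{}^{-1}g)$, that is,
\[
\epsilon_\alpha = \frac{z_\alpha}{\sqrt{n\, g^T\Fisher{}^{-1}g}}\,\Fisher{}^{-1}g .
\]
Its Fisher norm is $\frac12\epsilon_\alpha^T\Fisher{}\epsilon_\alpha = \frac{z_\alpha^2}{2n}$.

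\emph{Step 4: solve the SHAPO problem.} $U_{\Fisher{}}(-g,\delta)$ maximizes $-\epsilon^Tg$ subject to $\frac12\epsilon^T\Fisher{}\epsilon\le\delta$. By Cauchy--Schwarz in the $\Fisher{}$-inner product, $|\epsilon^Tg|\le \|\epsilon\|_{\Fisher{}}\|g\|_{\Fisher{}^{-1}}$, with equality when $\epsilon$ is a multiple of $\Fisher{}^{-1}g$. The maximizer is therefore
\[
\epsilon^\star = -\sqrt{\frac{2\delta}{g^T\Fisher{}^{-1}g}}\;\Fisher{}^{-1}g .
\]
Substituting $\delta=\frac{z_\alpha^2}{2n}$ and using $-|z_\alpha|=z_\alpha$ recovers exactly $\epsilon_\alpha$ from Step 3.

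The only delicate point is the sign and activity of the constraint, which is settled by $\alpha<\frac12$. Uniqueness of both solutions follows from strict convexity of the quadratic when $\Fisher{}\succ 0$. Equivalently, the two problems are dual to each other: minimizing the norm at a fixed level of $g^T\epsilon$ and minimizing $g^T\epsilon$ at a fixed norm select the same ray, and the values $\delta=\frac{z_\alpha^2}{2n}$ and $y_\alpha$ fix the same point on it.
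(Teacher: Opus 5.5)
Your proposal is correct and follows essentially the same route as the paper: solve the Gaussian-likelihood problem in closed form by a Lagrange/KKT argument, solve $U_{\Fisher{}}(-g,\delta)$ in closed form, and match the two at $\deltaSAM=\frac{z_\alpha^2}{2n}$. For the second problem the paper uses its Lagrangian-based Lemma where you use Cauchy--Schwarz in the $\Fisher{}$-inner product; your sign bookkeeping ($y_\alpha<0$ forcing an active constraint, and $\sqrt{2\delta}=-z_\alpha/\sqrt{n}$) is also more careful than the paper's, which drops a sign when solving for $\lambda$ and matches via $z_\alpha/\sqrt{n}=\sqrt{2\delta}$, a slip hidden only by squaring.
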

Intuitively, this result states that the adjusted parameter $\theta_0 + \epsilonSAM$ considered in \MethodName{} can be viewed as the most likely parameter under the parameter distribution $Q(\theta)$ that falls in the lower tail at the $\alpha$-quantile.
% Intuitively, this result states that the most likely parameter in the lower tail at the $\alpha$-quantile corresponds precisely to the parameter $\theta_0 + \epsilonSAM$, where $\deltaSAM = \frac{y_\alpha^2}{2n}$.
 Our approach \MethodName{}, which updates the current policy \(\pi_0\) by using the gradient $\tilde{g}$ calculated at this adjusted parameter, can therefore be understood as promoting an enhancement of \(\pi_0\) in a worst-case or pessimistic scenario, considering the existing epistemic uncertainty surrounding \(\pi_0\). 
\revision{By applying \MethodName{} with a $\deltaSAM>0$ to update the actor, we are therefore effectively enforcing pessimism in face of epistemic uncertainty about our current policy.}

Furthermore, by establishing the relationship \(\deltaSAM = \frac{z_\alpha^2}{2n}\), this proposition allows us to interpret the constraint \(\deltaSAM\) as a measure of confidence in the expected return of the current policy in the presence of epistemic uncertainty. More details and proof in Appendix \ref{sec:fischer}.

\begin{figure}[t]
    \centering
    \includegraphics[height=2cm]{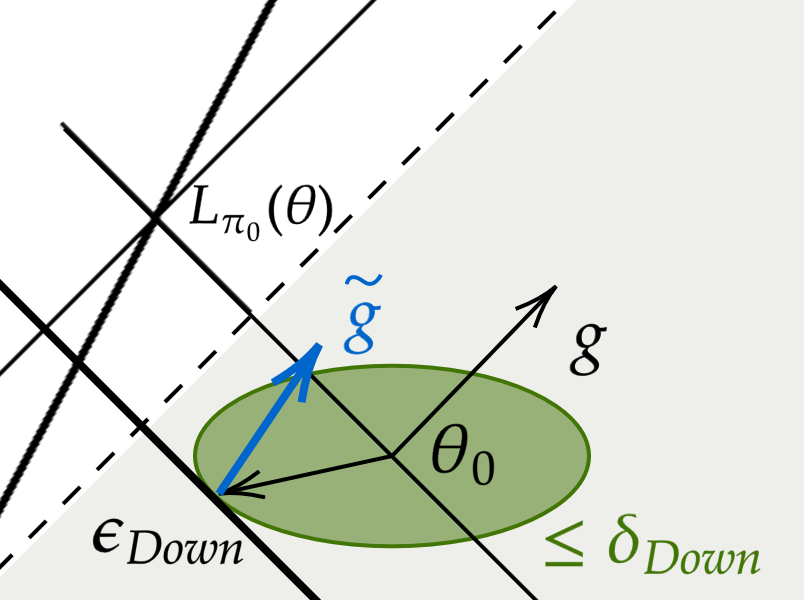}\qquad     \includegraphics[height=2cm]{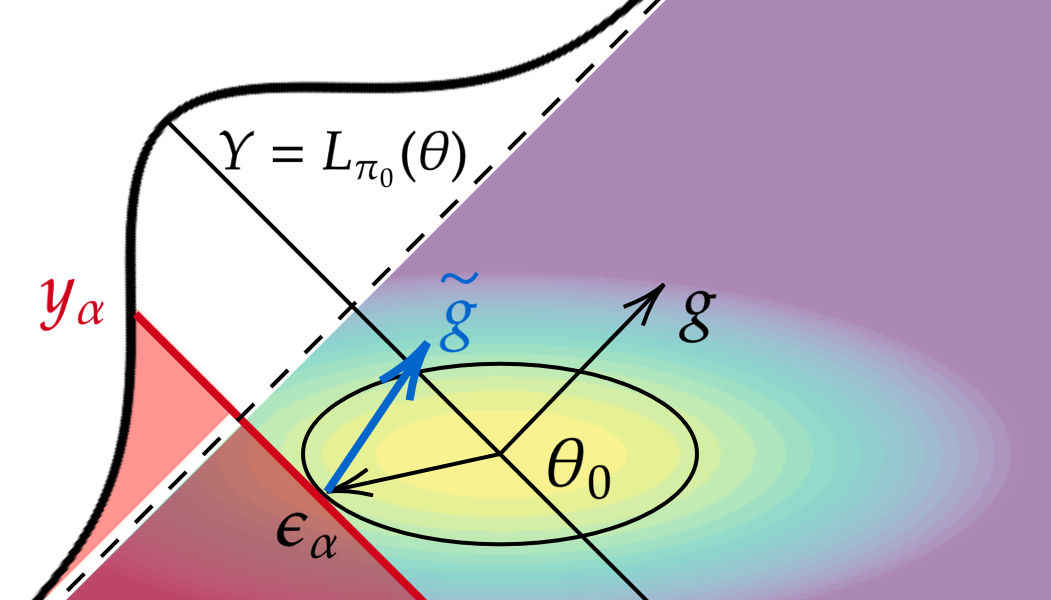}
    \qquad     \includegraphics[height=2cm]{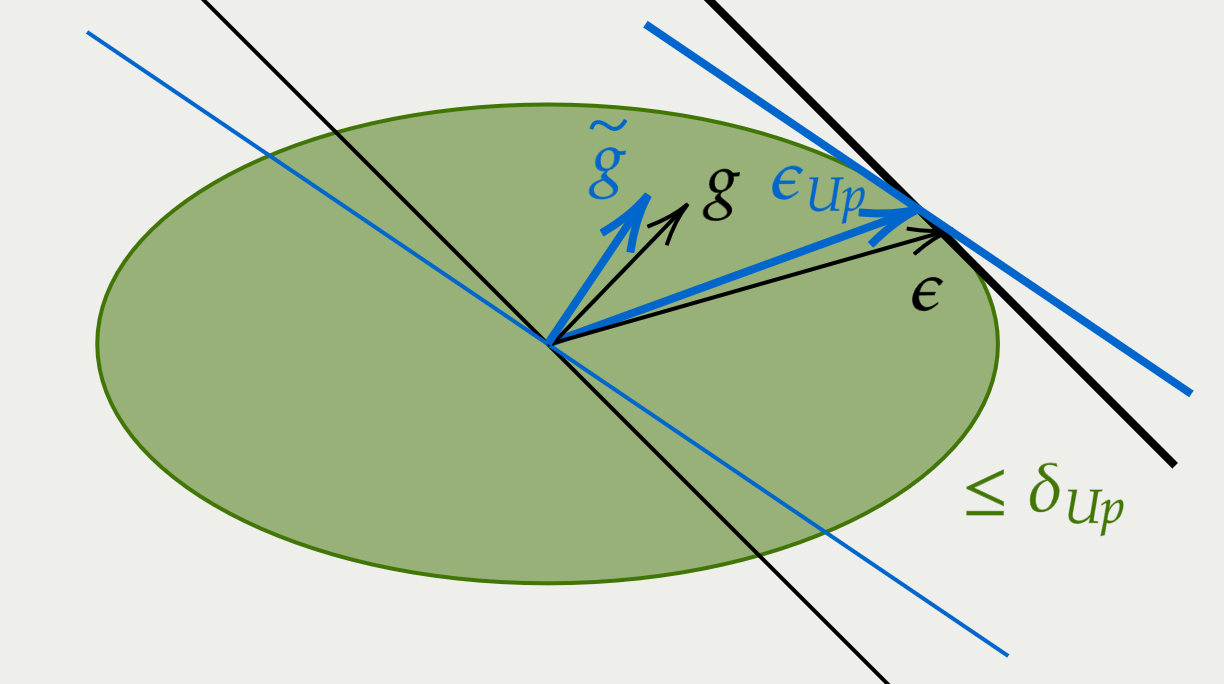}
    \caption{\small{
    \MethodName{} directs the policy update using the gradient \(\tilde{g}\) at an adjusted parameter (Right). Two perspectives on this adjustment are illustrated: (Left) The adjustment aims to minimize the expected return while staying within the trust region defined by \(\deltaSAM\). (Middle) Parameter uncertainty translates to uncertainty in the expected return, resulting in a maximum likelihood adjustment that remains below the \(\alpha\)-quantile \(y_\alpha\).
    % \MethodName{} guides the policy update using the gradient $\tilde{g}$ at an adjusted parameter (Right). Two perspectives on the parameter adjustment of \MethodName{}. Left: Parameter uncertainty translates to uncertainty about the expected return of the current policy, leading to an adjustment with maximum likelihood while representing to an expected return below the $\alpha$-quantile $y_\alpha$. Middle: The adjustment is framed so as to minimize the expected return while staying in the trust region specified by $\deltaSAM$. 
    }}
    \label{fig:DualFisher}
\end{figure}

\input{iclr2026/sections/gaussian_policy_analysis_nd}

% \subsection{SAM on critic}
\revision{
\subsection{\MethodName{} for on-policy RL algorithms}
\label{subsec:practical}
}
\begin{revisionblock}
As presented in this section, Sharpness Aware Policy Optimization (\MethodName{}) modifies the TRPO algorithm with a Lagrange objective and pseudo-code is provided in Algorithm~\ref{SAM-TRPO}. We abstract this modification in Algorithm~\ref{SHAPO}: this code returns a parameter direction $\tilde{g}$ for updating the current policy $\pi_\theta$ based on the objective function $L$, the Fisher matrix $F$ of the policy parametrization and constraint bound $\deltaSAM$. Formulated this way, our method can be applied to most online policy RL algorithms and we provide pseudo-code in Appendix~\ref{sec:pseudo} for CRPO and CPO. 
\end{revisionblock}
% \revision{We provide pseudo-code for \MethodName{} in Algorithm~\ref{SHAPO}. Formulated this way, our method can be applied to most online policy RL algorithms. It returns a parameter direction $\tilde{g}$ for updating the current policy $\pi_\theta$ based on the objective function, the Fisher matrix of the policy parametrization and constraint bound $\deltaSAM$. This function effectively abstracts lines $4-7$ of Algorithm~\ref{SAM-TRPO} and allows us to present how our method applies to other algorithms. Our method applies to any on policy RL algorithms, we provide details in Appendix A.}

% \subsection{Scheduling Pessimism}
\begin{revisionblock}

While Proposition~\ref{prop:SHAPOdual} links the trust-region constraint $\deltaSAM$ to the percentile $z_\alpha$ of the normal distribution (where $\alpha$ is the level of pessimism in face of uncertainty), the effective sample size $n$ cannot be easily estimated in continuous-control reinforcement learning where data are temporally correlated and only a subset of collected samples provide independent information. Furthermore, due to the complex coupling between the actor and critic, together with the challenges posed by exploration, it is difficult to determine the appropriate level of pessimism (\textit{i.e.} the value of $\alpha$) to adopt at each stage of the training. 

In this context, adopting a constant value of $\deltaSAM$ during training and treating this value as tunable hyperparameter allows us to directly control the desired safety--efficiency tradeoff without relying on uncertain estimates of $n$. This makes the constant-$\deltaSAM$ strategy both simple and effective, offering a stable mechanism for regulating pessimism throughout training. We further discuss this approach and other options to schedule the level of pessimism in Subsection~\ref{subsec:ablation} and in Appendix~\ref{sec:alpha_delta}. 

\end{revisionblock}

\begin{figure}[t]
    \centering
    \vspace{-0.5cm}
    \includegraphics[width=\linewidth]{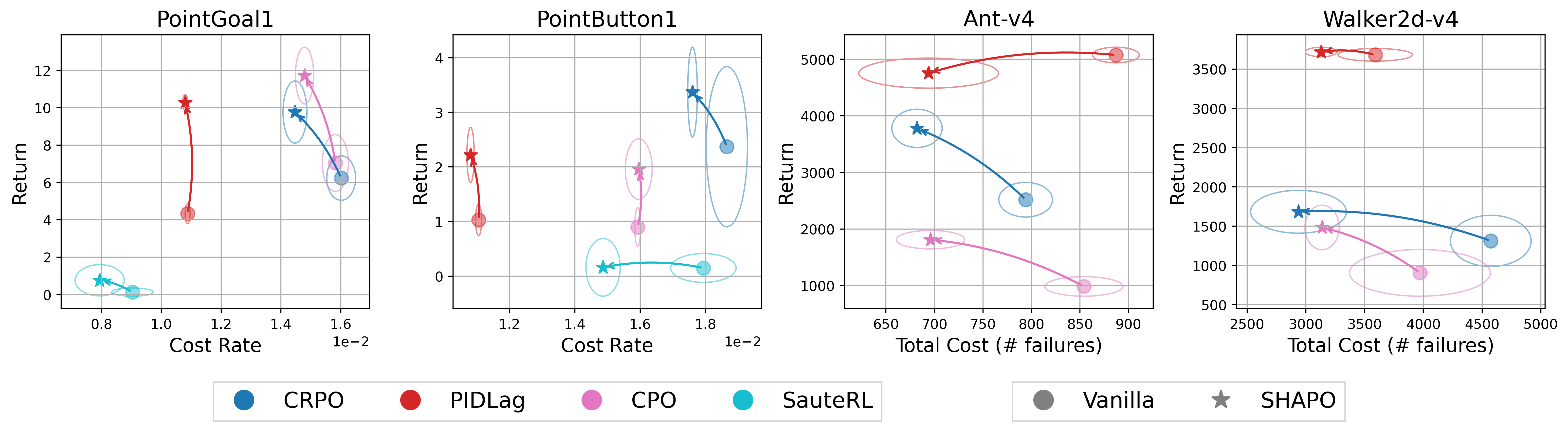}
    \vspace{-2em}
        \caption{\small{Performance of \MethodName{} on four different tasks across four different baselines over $5$ training seeds. We report Cost Rate (Total Cost / Total Env Steps) for Safety Gym environments after training for $10M$ environment steps and Total Cost for MuJoCo environments after $5M$ environment steps. \MethodName{} (shown with a star) significantly improves the performance of the baseline algorithms (shown with a circle) on both safety and efficiency thus improving the safe exploration capabilities of the baseline algorithms. \revision{Width and height of the ellipses represent the standard error over 5 seeds. %in both the $x$-direction (cost rate or total cost) and the $y$-direction (return).
        } }}
        
    \label{fig:results_main}
    % \vspace{-0.3cm}
\end{figure}

\begin{figure}[t] % 'h' for placing figures near here in the text
  % \vspace{-0.5cm}

  \centering
  \vspace{-1em}
  \begin{minipage}{0.49\textwidth} % First figure, takes up 48% of the text width
    \centering
    \includegraphics[width=\linewidth]{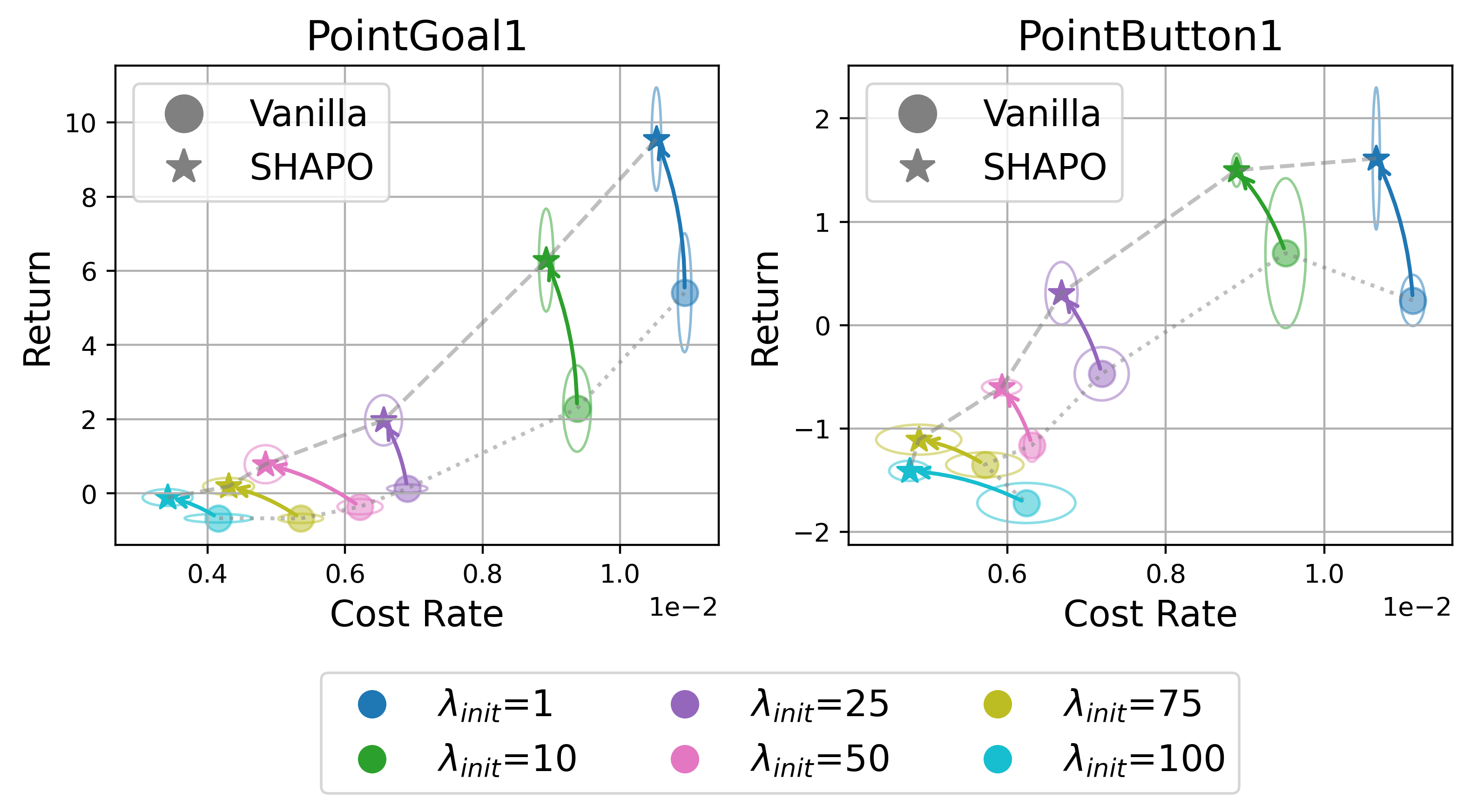}
    \vspace{-2em}
    \caption{\small{Results for different values of initial lambda ($\lambda_{init}$) for \textit{PIDLag} algorithm with and without \MethodName{}. We can see that \MethodName{} improves upon the Vanilla \textit{PIDLag} for all configurations.}}
    \label{fig:lambda_pareto}
  \end{minipage}
  \hfill
  \begin{minipage}{0.49\textwidth} % Second figure, also takes up 48% of the text width
    \centering
    \includegraphics[width=\linewidth]{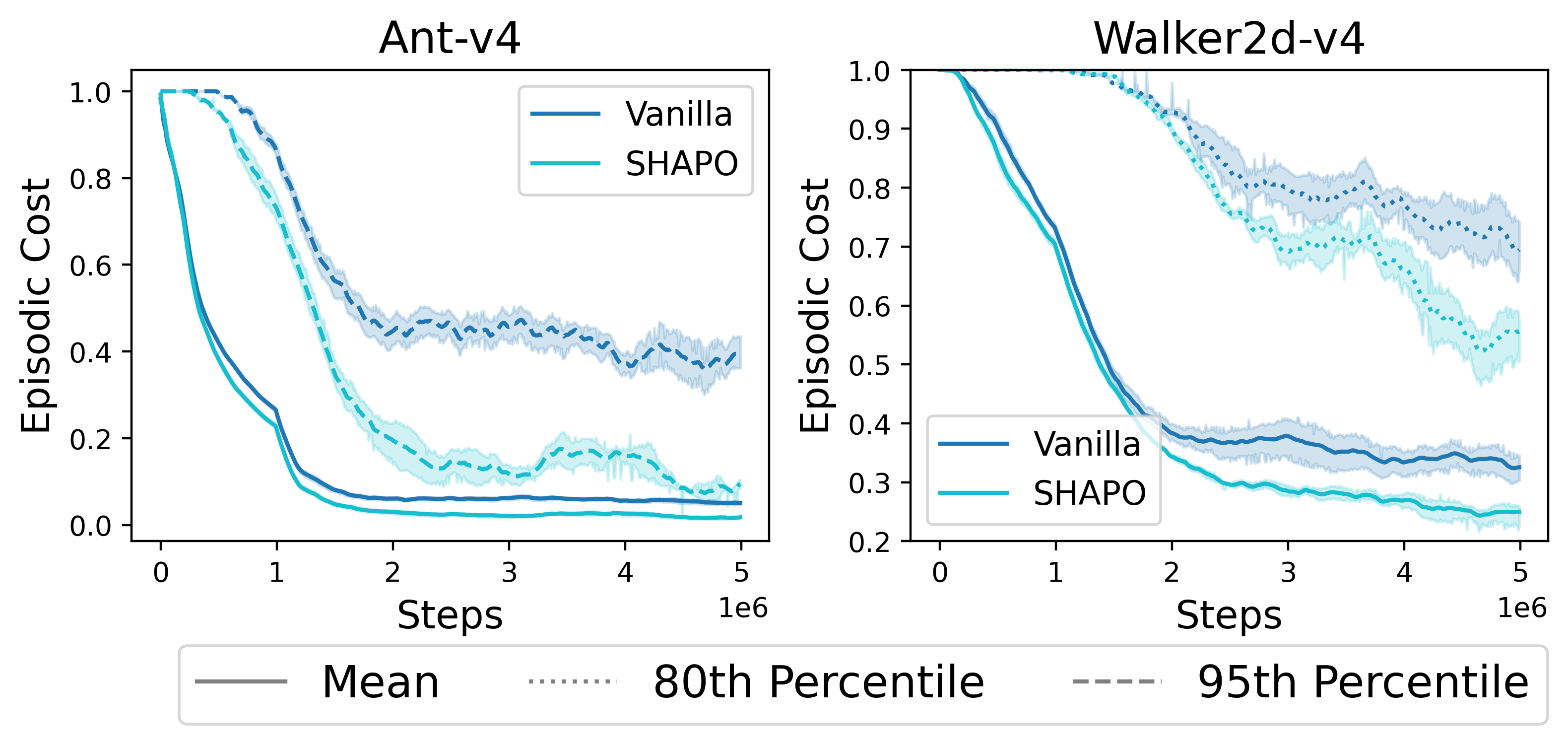}
    \vspace{-2em}
    \caption{\small{\MethodName{} results in policies with episodic cost distributions with smaller tails (80th percentile and 95th percentile)  crucially for safe exploration during the learning phase.}}
    \label{fig:epcost_tail}
  \end{minipage}
  \vspace{-0.5cm}

\end{figure}

Finally, in our method, we also incorporate Sharpness Aware Minimization with Euclidean neighborhoods for the critic, as proposed by~\cite{sam} for a supervised learning task. By considering the sharpness of the loss landscape during the training of the critic, we aim to focus on flatter regions that yield more stable value function approximations. %This approach helps reduce overfitting and enhances the critic's ability to generalize across various states and actions, resulting in more reliable value estimates for improved decision-making in the reinforcement learning framework.

%% file: iclr2026/sections/gaussian_policy_analysis_nd.tex
\subsection{Analysis of \MethodName{} on a simple Gaussian policy}

\begin{figure}[t]
    \vspace{-1cm}

    \centering
\includegraphics[width=0.9\textwidth]{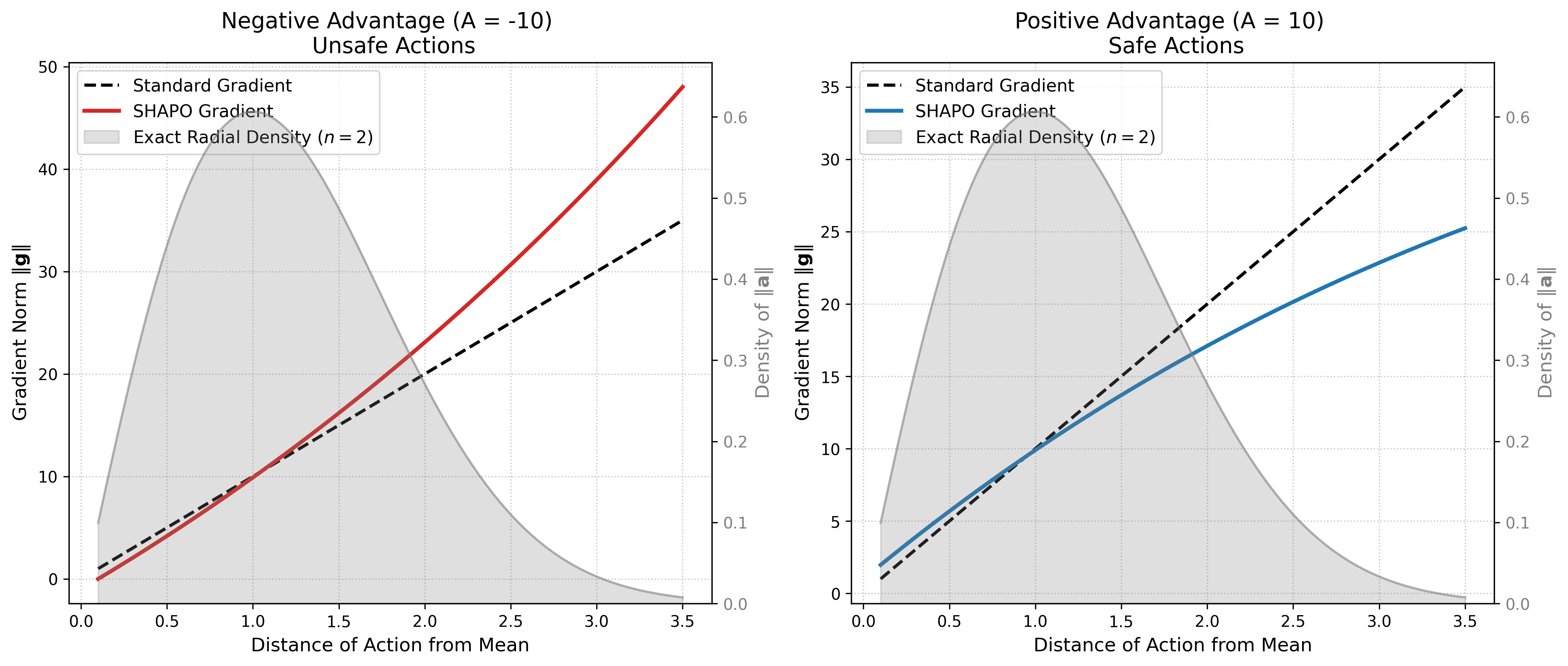} 
\vspace{-1em}
\caption{\small{\textit{Gaussian policy}: The gradient utilized in SHAPO exhibits a larger amplitude for rare and unsafe actions ($\|\mathbf{a}\| \gg 1$), while it is comparatively smaller for rare safe actions, compared to the classical gradient. The shaded area represents the radial density for the $2$-dimensional case.}}
    \label{fig:ndGaussian}
        \vspace{-0.3cm}

\end{figure}

Before detailing the geometric mechanics of the \MethodName{} update, we first justify our simplified analytical setting. In the general reinforcement learning formulation, the policy $\pi_\theta(\mathbf{a}|s)$ is heavily conditioned on the state, and the overall utility function is an expectation over the state-action marginals. However, because the gradient of this expected utility is inherently an average of gradients computed across individual states, we can isolate the core behavior of our method by examining a single term in this expectation. 

Therefore, our analysis proceeds by fixing an arbitrary state $s$, effectively reducing the conditional policy to a state-independent probability distribution, $\pi(\mathbf{a})$. By focusing on how \MethodName{} alters the parameter update for a single observed action $\mathbf{a}$ and its associated advantage $A$ in this isolated context, we expose the fundamental mechanism that scales and shapes the global policy update in the full, state-dependent setting. 

Our proposed approach to update the current policy $\pi_0$ is guided by the gradient $\tilde{\mathbf{g}}$ of $\ALoss{\pi_0}(\theta)$ evaluated at $\theta_0+\boldsymbol{\epsilon}_{\text{SAM}}$, while classical approaches use the gradient $\mathbf{g}$ evaluated at $\theta_0$. We observe that the isolated contribution of a single state-action pair $(s,\mathbf{a})$ towards $\ALoss{\pi_0}$ is given by:
$\ALoss{\pi_0}$ and its gradient can now be expressed as:
$$
L(\boldsymbol{\mu};\mathbf{a},A)=\frac{A}{\pi_0(\mathbf{a})}\pi(\mathbf{a};\boldsymbol{\mu}) \quad \nabla_{\boldsymbol{\mu}} L(\boldsymbol{\mu};\mathbf{a},A)=\frac{A}{\pi_0(\mathbf{a})}\nabla_{\boldsymbol{\mu}}\pi(\mathbf{a};\boldsymbol{\mu}) 
$$ 
The standard gradient $\mathbf{g}(\mathbf{a},A)= \nabla_{\boldsymbol{\mu}} L(\boldsymbol{\mu};\mathbf{a},A)\vert_{\boldsymbol{\mu}=\mathbf{0}}$ and the gradient $\tilde{\mathbf{g}}(\mathbf{a},A)=\nabla_{\boldsymbol{\mu}} L(\boldsymbol{\mu};\mathbf{a},A)\vert_{\boldsymbol{\mu}=\tilde{\boldsymbol{\mu}}}$ at $\tilde{\boldsymbol{\mu}}=\boldsymbol{\mu}_0 +\boldsymbol{\epsilon}_{\text{SAM}}$ are as given in the following Proposition (See Appendix \ref{A:Gaussian} for details).
\begin{proposition}
    Under the $nD$ Gaussian model, for $\mathbf{a} \neq \mathbf{0}$, we have
    $$
    \mathbf{g}(\mathbf{a},A)=A\mathbf{a} \quad \text{and} \quad \tilde{\mathbf{g}}(\mathbf{a},A)={A}{\boldsymbol{\Delta}}\exp\left(\frac{\|\mathbf{a}\|^2-\|\boldsymbol{\Delta}\|^2}{2}\right)
    $$
    $$
    \text{with}\quad \boldsymbol{\Delta}=\begin{cases}
        \mathbf{a} - \rho \frac{\mathbf{a}}{\|\mathbf{a}\|} &\text{if } A < 0\\
        \mathbf{a} + \rho \frac{\mathbf{a}}{\|\mathbf{a}\|} &\text{otherwise,}
    \end{cases}
    $$
    \revision{where $\rho=\sqrt{2\deltaSAM}$.}
\end{proposition}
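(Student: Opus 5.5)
The plan is to make the "$nD$ Gaussian model" explicit: $\pi(\mathbf{a};\boldsymbol{\mu})=(2\pi)^{-n/2}\exp(-\|\mathbf{a}-\boldsymbol{\mu}\|^2/2)$, i.e. a Gaussian with identity covariance whose mean is the only parameter, with current parameter $\boldsymbol{\mu}_0=\mathbf{0}$. Three ingredients are then needed: the standard gradient at $\mathbf{0}$, the Fisher matrix and hence the perturbation $\boldsymbol{\epsilon}_{\text{SAM}}=U_{\Fisher{}}(-\mathbf{g},\deltaSAM)$ defined by problem~\ref{TRPO}, and finally the gradient at the perturbed mean.

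\textbf{Standard gradient.} We have $\nabla_{\boldsymbol{\mu}}\pi(\mathbf{a};\boldsymbol{\mu})=(\mathbf{a}-\boldsymbol{\mu})\,\pi(\mathbf{a};\boldsymbol{\mu})$. At $\boldsymbol{\mu}=\mathbf{0}$ the factor $\pi(\mathbf{a};\mathbf{0})=\pi_0(\mathbf{a})$ cancels the importance denominator, so $\mathbf{g}(\mathbf{a},A)=A\mathbf{a}$.

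\textbf{Fisher matrix and perturbation.} For the mean of a unit-covariance Gaussian, $\Fisher{}=\E[(\mathbf{a}-\boldsymbol{\mu})(\mathbf{a}-\boldsymbol{\mu})^T]=I$. The trust region $\tfrac12\epsilon^T\epsilon\le\deltaSAM$ is therefore the Euclidean ball of radius $\rho=\sqrt{2\deltaSAM}$. Maximizing a linear functional $\langle -\mathbf{g},\epsilon\rangle$ over this ball gives
\[
\boldsymbol{\epsilon}_{\text{SAM}}=-\rho\,\frac{\mathbf{g}}{\|\mathbf{g}\|}=-\rho\,\sign(A)\frac{\mathbf{a}}{\|\mathbf{a}\|}.
\]
This requires $\mathbf{a}\neq\mathbf{0}$ and $A\neq0$; when $A=0$ both sides of the claim vanish, so the case convention is harmless. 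The general Fisher-metric solution $\epsilon\propto \Fisher{}^{-1}(-\mathbf{g})$, scaled to the boundary, reduces to this when $\Fisher{}=I$. The perturbation is computed from the single-sample utility $L(\cdot;\mathbf{a},A)$, consistent with the isolated-contribution viewpoint adopted just above.

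\textbf{Gradient at the perturbed point.} Set $\tilde{\boldsymbol{\mu}}=\boldsymbol{\epsilon}_{\text{SAM}}$ and $\boldsymbol{\Delta}=\mathbf{a}-\tilde{\boldsymbol{\mu}}=\mathbf{a}+\rho\,\sign(A)\,\mathbf{a}/\|\mathbf{a}\|$. This gives exactly the two cases in the statement: $-\rho$ when $A<0$ and $+\rho$ otherwise. Then
\[
\tilde{\mathbf{g}}=\frac{A}{\pi_0(\mathbf{a})}\,\boldsymbol{\Delta}\,\pi(\mathbf{a};\tilde{\boldsymbol{\mu}}),
\]
and the ratio of Gaussian densities is $\exp\bigl(-\|\boldsymbol{\Delta}\|^2/2+\|\mathbf{a}\|^2/2\bigr)$, since the normalizing constants cancel. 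This yields the claimed formula.

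There is no real obstacle here. The only step needing care is the sign bookkeeping: $\boldsymbol{\epsilon}_{\text{SAM}}$ is chosen against $\mathbf{g}$, so subtracting it in $\boldsymbol{\Delta}$ pushes $\mathbf{a}$ outward for $A>0$ and inward for $A<0$. That, together with identifying the Fisher matrix as the identity so that $\rho=\sqrt{2\deltaSAM}$, is what the argument hinges on.
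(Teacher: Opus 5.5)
Your proposal is correct and follows the paper's proof essentially step by step. Like the paper, you compute $\mathbf{g}=A\mathbf{a}$, identify the trust region as a Euclidean ball of radius $\rho=\sqrt{2\deltaSAM}$, take $\boldsymbol{\epsilon}_{\text{SAM}}=-\rho\,\sign(A)\,\mathbf{a}/\|\mathbf{a}\|$, and evaluate the Gaussian density ratio at the shifted mean. The differences are minor: you obtain the ball from $\Fisher{}=I$, which is the quadratic form used in the definition of $U_{\Fisher{}}$, whereas the paper notes that the exact KL equals $\|\boldsymbol{\epsilon}_{\text{SAM}}\|^2/2$; and you explicitly handle the degenerate case $A=0$, which the paper leaves implicit.
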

Figure~\ref{fig:ndGaussian}, shows plots of the norms of the two gradients as functions of the action magnitude $\|\mathbf{a}\|$ for two values of advantage ($A=-10$ and $A=10$) with $\rho=0.1$ \revision{(i.e. $\deltaSAM=0.005$)}.

We see that \MethodName{} gradients differ from standard gradients for rare actions $\|\mathbf{a}\|\gg 1$. When the advantage is negative, indicating an unsafe action, we see that the magnitude of \MethodName{}'s gradient is larger compared to the standard gradient. In contrast, for positive advantage \MethodName{}'s gradient is smaller than the standard gradient. The analysis of this simplified setting shows that, under certain conditions, \MethodName{}'s policy update promotes a different treatment of rare actions depending on the sign of the corresponding advantage.
When unsafe, rare actions are taken very seriously and an important update of the policy is promoted. In comparison, \MethodName{} opts for smaller updates when facing safe actions, even when they have large advantage. Overall, this behavior aligns well with our safe exploration objectives and helps explain why our approach favors policies that infrequently incur high costs.

%% file: iclr2026/sections/experiments.tex
\section{Experimental Setup}
\vspace{-1em}
\textbf{Environments.} 
We evaluate \MethodName{} on Safety Gym \citep{safety-gym} (\textit{PointGoal1-v0}, \textit{PointButton1-v0}) and MuJoCo \citep{mujoco} (\textit{Ant-v4}, \textit{Walker2d-v4}). 
In \textit{PointGoal1-v0}, an agent must reach a random goal while avoiding unsafe regions that accumulate cost; in \textit{PointButton1-v0}, it must press buttons in the correct order, with penalties for wrong presses and additional hazards. 
In both tasks, the objective is to succeed while keeping cost under $\beta=10$. 
In MuJoCo, the agent must run fast without falling, where each fall is a constraint violation.  

\textbf{Baselines.} 
Although Sec.~\ref{sec:method} more specifically describes how our approach modifies TRPO~\citep{pmlr-v37-schulman15} with a Lagrangian objective, \MethodName{} virtually applies to any on-policy method in a straightforward manner. 
We therefore also study the effect of \MethodName{} on CPO~\citep{cpo}, PIDLag \citep{pid_lag} (Lagrangian TRPO with PID control of $\lambda$), CRPO~\citep{crpo} (alternating primal updates) and SauteRL~\citep{sauteRL} (state augmentation with constraint budget \revision{with TRPO~\citep{pmlr-v37-schulman15} as base algorithm}). \revision{Pseudo-code for the \MethodName{} versions of the baseline algorithms is provided in Appendix \ref{sec:pseudo}.}

% Although Sec.~\ref{sec:method} analyzed TRPO \citep{pmlr-v37-schulman15} with a Lagrangian objective, the \MethodName{} approach can be applied to any on-policy method. 
% We compare against CPO \citep{cpo}, PIDLag \citep{pid_lag} (Lagrangian TRPO with PID control of $\lambda$), CRPO \citep{crpo} (alternating primal updates), and SauteRL \citep{sauteRL} (state augmentation with constraint budget).  

\textbf{Metrics.} 
In MuJoCo we report episodic return (efficiency) and total failures/cost (safety). 
In Safety Gym we track episodic return and cost during training, plus final average return and cost-rate (total cost / steps) as defined in \citep{safety-gym}. \revision{All results are generated by averaging over $5$ seeds, the standard error in $x$ (Cost Rate / Total Cost) and $y$ (Return) are represented by the width and height of the ellipses, respectively.}

\textbf{Implementation Details.} 
When adding \MethodName{}, baseline's hyperparameters are frozen and only $\deltaSAM$ and $\rho_{critic}$ are tuned. Thus the safety–efficiency tradeoff is determined by the baseline, while \MethodName{} provides an orthogonal improvement. More details in the Appendix \ref{sec:hyper}. 

%% file: iclr2026/sections/results.tex
\section{Results}
\label{sec:results}

% \begin{figure}
%     \centering
%     \includegraphics[width=0.5\linewidth]{iclr2026/figs/esam_vs_fsam.png}
%     \caption{Results for different values of initial lambda for PIDLag algorithm with and without SAM. We can see that SAM improves upon the Vanilla PIDLag for all configurations.}
%     \label{fig:esam_vs_fsam}
% \end{figure}

Fig.~\ref{fig:results_main} summarizes performance across four baseline algorithms and their \MethodName{} counterparts on four tasks. In \textit{PointGoal1} and \textit{PointButton1}, we plot cost-rate (x-axis) against return (y-axis), capturing the safety–efficiency tradeoff during training. For \textit{Ant-v4} and \textit{Walker2d-v4}, we report cumulative failures (total cost). Across all benchmarks, \MethodName{} consistently improves either safety, task performance, or both. For instance, \textit{SauteRL}~\citep{sauteRL} achieves the lowest cost-rate on \textit{PointGoal1} and \MethodName{} further reduces this cost without hurting returns. Since \textit{SauteRL} relies on augmenting states with the remaining budget, it is inapplicable in MuJoCo (where $\beta=0$ means any incurred cost violates the constraint), so we omit those results. Detailed results have been presented in the Appendix Fig. \ref{fig:pointgoal1}, \ref{fig:pointbutton1}, \ref{fig:ant}, \ref{fig:walker}.

% Fig.~\ref{fig:results_main} shows results for four different baseline algorithms and their \MethodName{} versions on four different tasks. In case of \textit{PointGoal1} and \textit{PointButton1}, we have cost-rate on the x-axis and Return on the y-axis. Cost-rate is used as a measure to evaluate the safety of the algorithm throughout the learning phase. In case of \textit{Ant-v4} and \textit{Walker2d-v4}, cummulative failures or Total cost throughout the learning process has been shown. 

% Depending on the hyperparameters and the baseline algorithms, \MethodName{} results in significantly improved safety performance or task performance or both. \textit{SauteRL} \citep{sauteRL} prioritizes safety over task performance and has the lowest cost-rate for \textit{PointGoal1} among baselines which is further improved by applying \MethodName{}. Since \textit{SauteRL} relies on augmenting the state with the remaining constraint budget its useless for Mujoco environments where any cost induced can lead to constraint violation (i.e. $\beta = 0$), thus we chose to omitt results for \textit{SauteRL} for the Mujoco environments. 

The safety–efficiency tradeoff in Lagrangian methods is shaped by hyperparameters such as the initial multiplier $\lambda_{init}$. Fig.~\ref{fig:lambda_pareto} shows results for \textit{PIDLag} across different $\lambda_{init}$ values. In every setting, \MethodName{} expands the Pareto frontier of return versus cost, indicating that it consistently delivers better safety–efficiency tradeoffs than the vanilla algorithm.

% Since the tradeoff between cost and return (i.e. safety and efficiency during learning) is determined by the underlying hypeparameter like the lagrangian multiplier in lagrange-based methods. By varying this hyperparameter we can modulate this tradeoff and in Fig. \ref{fig:lambda_pareto} we show that \MethodName{} improves performance at different values for $\lambda_{init}$ for \textit{PIDLag} algorithm where the $\lambda$ is controlled by a PID controller with parameters $(\lambda_{init}, k_p, k_i, k_d)$. In Fig. 2 results are shown by only varying $\lambda_{init}$ and keeping all the other parameters as constant.

% Fig.~\ref{fig:epcost_tail} studies the tail of the episodic cost distribution, towards this end we plot $95^{th}$ percentile for \textit{Ant-v4} and $80^{th}$ percentile for \textit{Walker2d-v4} alongside the mean episodic cost. Although both Vanilla and \MethodName{} reduce the mean episodic cost, because of risk-averseness to parameter uncertainty highlighted in Section \ref{sec:sam_risk} \MethodName{} also reduces the worst-case cost i.e. the tail of the cost distribution leading to further reduction in Total failures or cost during learning shown in Fig.\ref{fig:results_main}. More detailed plots provided in the Appendix. 

To assess safety beyond mean statistics, Fig.~\ref{fig:epcost_tail} shows episodic cost distributions. While both vanilla baselines and \MethodName{} reduce average cost, only \MethodName{} consistently suppresses heavy tails ($95^{th}$ percentile in \textit{Ant-v4}, $80^{th}$ in \textit{Walker2d-v4}\footnote{Walker2d-v4 is more challenging, with the $95^{th}$ percentile remaining near one, so we report the $80^{th}$ instead.}). This tail suppression significantly enhances the agent's safety and aligns with our safe exploration objective. %This tail suppression is critical, as a few high-cost episodes dominate total failures. 
Consistent with our analysis in Sec.~\ref{sec:sam_risk}, \MethodName{}’s pessimism when faced with parameter uncertainty reduces catastrophic events, yielding fewer cumulative failures (i.e. Total Cost) (Fig.~\ref{fig:results_main}). Detailed plots are included in the Appendix.

\subsection{Ablations \& Analysis}\label{subsec:ablation}

\begin{figure}[t] % 'h' for placing figures near here in the text
  \vspace{-0.4cm}
  \centering
  \begin{minipage}{0.49\textwidth} % First figure, takes up 48% of the text width
    \centering
    \includegraphics[width=\linewidth]{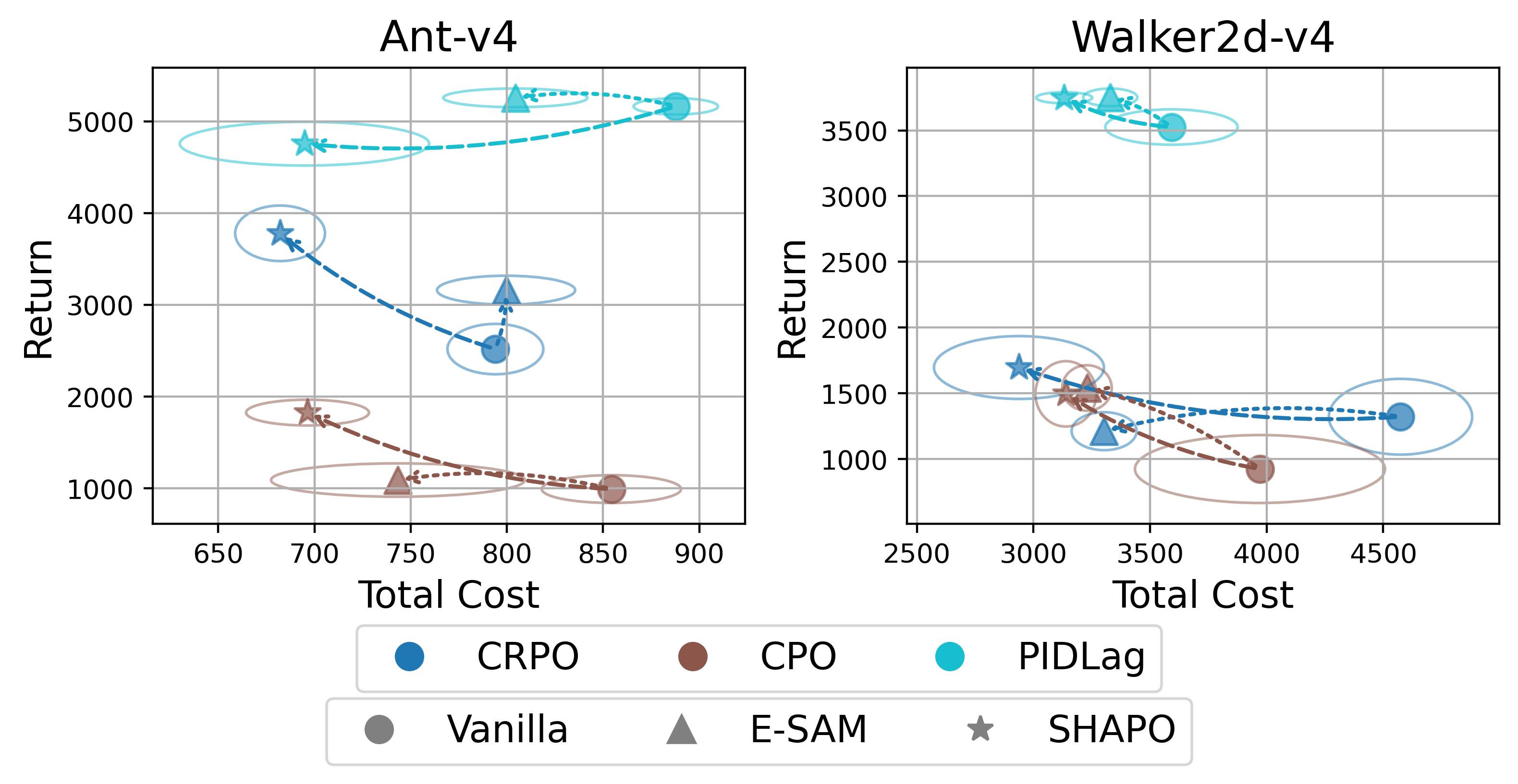}
    \vspace{-2em}
    \caption{\small{We compare results for \MethodName{} when using perturbation in the euclidean space \textit{E-SAM} or in the Fisher Metric space (\MethodName{}). We can see that \MethodName{} is consistently better than \textit{E-SAM} because of perturbation that ensure worsening of the inner objective. }}
    \label{fig:esam_vs_fsam}
  \end{minipage}
  \hfill
  \begin{minipage}{0.49\textwidth} % Second figure, also takes up 48% of the text width
    \centering
    % \vspace{-2em}
    \includegraphics[width=\linewidth]{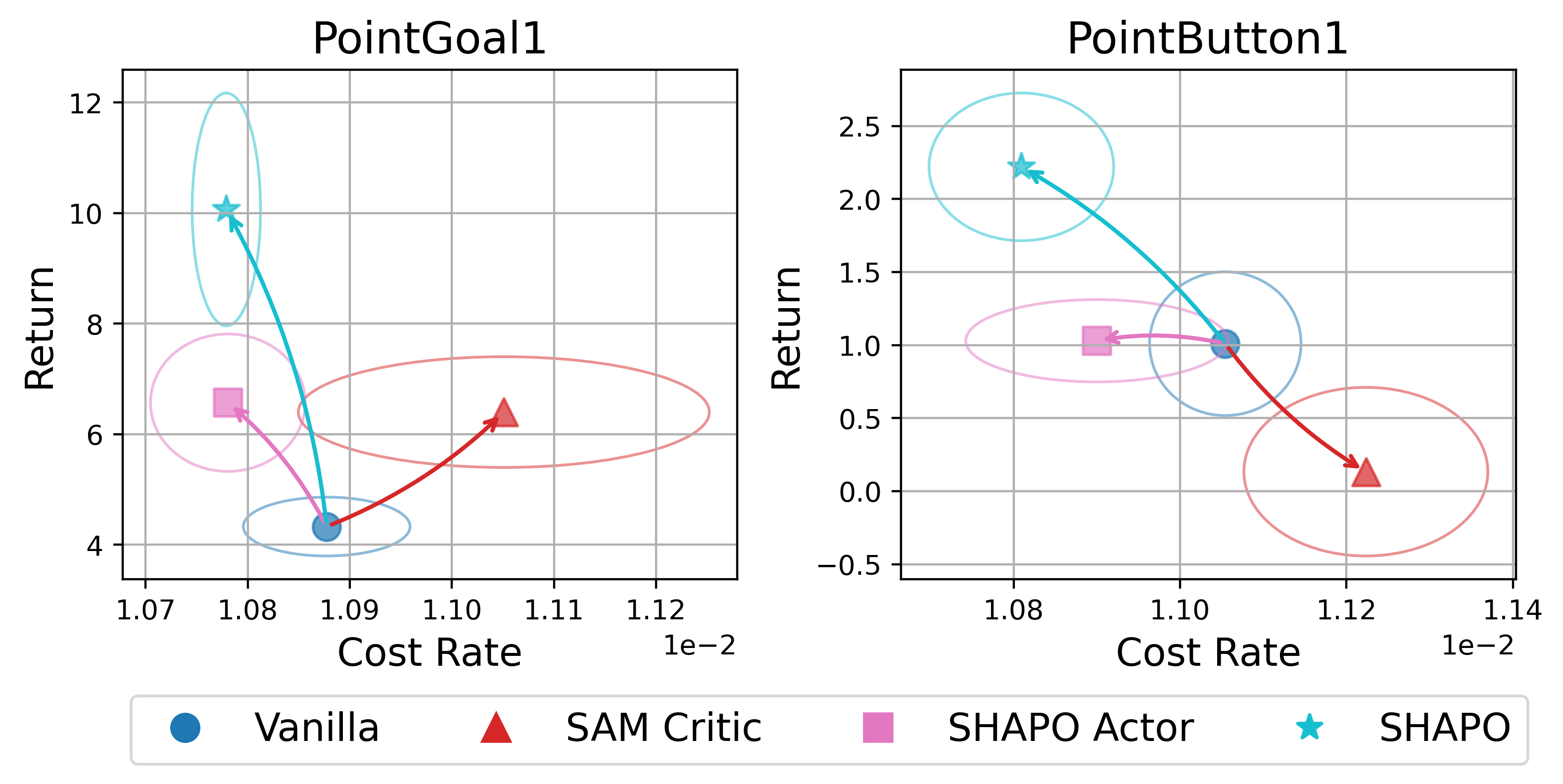}
    \vspace{-2em}
    \caption{\small{\MethodName{} leads to significant improvement in cost-rate, even without SAM on critic. While SAM on critic leads to improvement in return, only adopting SAM for the critic does not promote safe exploration in \textit{PointGoal1} and \textit{PointButton1}}}
    \label{fig:sam_actor_critic}
  \end{minipage}
    \vspace{-0.3cm}

\end{figure}

\textit{Fisher versus Euclidean perturbations:} 
Fig.~\ref{fig:esam_vs_fsam} compares perturbations in Euclidean (E-SAM) and Fisher (\MethodName{}) spaces. \MethodName{} consistently outperforms E-SAM across tasks and baselines, validating that Fisher perturbations, being aligned with the local geometry of on-policy data, yield more principled and effective risk-sensitive updates.  

\textit{Actor versus critic perturbations:} 
Fig.~\ref{fig:sam_actor_critic} isolates the effect of applying \MethodName{} to different components. When SAM is applied only to the critic (SAM Critic), perturbations do not enforce risk-aversion and can even increase cost, since squared-error perturbations treat under- and overestimation symmetrically. In contrast, applying \MethodName{} only to the actor (\MethodName{} Actor) markedly improves safety by steering exploration conservatively. Finally, Using \MethodName{} (\MethodName{} Actor + SAM on critic) yields the strongest gains in both safety and efficiency.

\begin{wrapfigure}{R}{0.54\textwidth}
    \centering
    \vspace{-20pt}  % adjust as needed
    \includegraphics[width=0.54\textwidth]
    {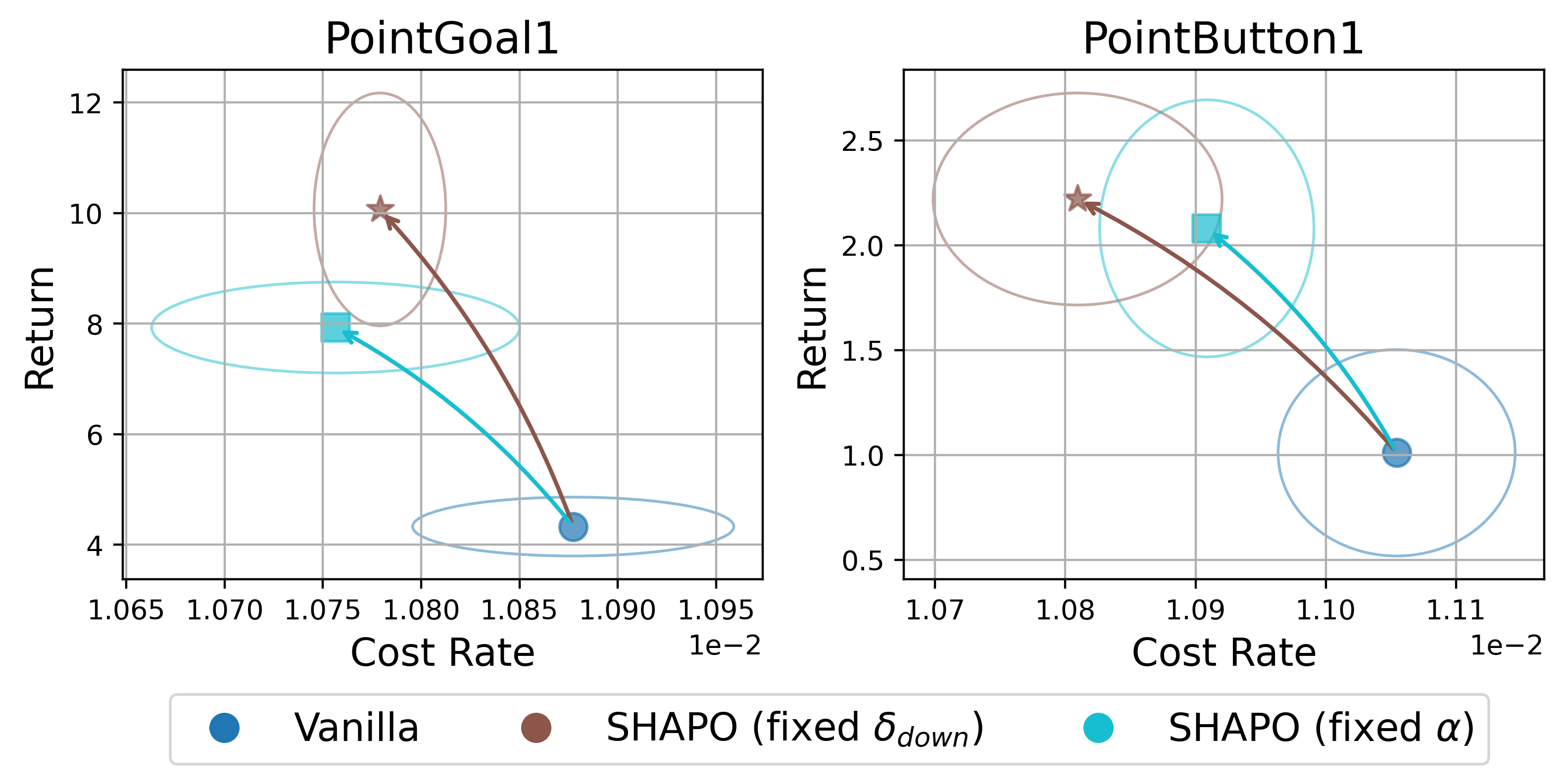}
    \vspace{-2em}
    \caption{\revision{\small{Pessimism schedule.
    We compare the selection of a constant $\deltaSAM$ with 
    the selection of a value $\alpha\in (0,0.5)$ using the schedule $\deltaSAM^t= \frac{z_{\alpha}^2}{2n_t}$ where $n_t$ is the number of state-action pairs seen by the agent. 
%    a schedule given by $\deltaSAM^t= \frac{z_{\alpha}^2}{2n_t}$ where $\alpha$ is a hyperparameter and $n_t$ is the number of state-action pairs seen by the agent. %based on the selection of the value for $\alpha$ and decreasing $\deltaSAM^t$ as a function of the number of state-action pairs seen by the agent.
    Both approaches outperform the vanilla approach.
%    Here we show two versions of \MethodName{} using different strategies of determining $\delta_{down}$ a) (fixed $\delta_{down}$) where $\delta_{down}$ is fixed throughout the training and b) (fixed $\alpha$) where $\delta_{down}$ is decayed based on $\delta_{down} = \frac{z_{\alpha}^2}{2n}$, assuming $n$ is all the state-action pairs seen by the agent.
    }
    }
    }
    \label{fig:alpha_delta_main}
    \vspace{-10pt}
\end{wrapfigure}

% \begin{revisionblock}
\textit{Pessimism schedule:}
In light of Proposition~\ref{prop:SHAPOdual}, we consider for each update step $t$ the number $n_t$ of state-action pairs collected so far by our agent as a proxy for the effective sample size $n$ from Subsection~\ref{sec:sam_risk}. %This allows us to evaluate the level of pessimism  $\alpha_t$ used in \MethodName{} at each update step.
%In virtue of Proposition~\ref{prop:SHAPOdual}, t
This allows us to interpret the bound $\deltaSAM^t$ at step $t$ in terms of $n_t$ and the percentile $z_{\alpha_t}$  of the distribution $\mathcal{N}(0,1)$ at the level of pessimism $\alpha_t$.
Our simple strategy using a fixed $\deltaSAM$ therefore can be interpreted as increasing the level of pessimism as the agent learns. Indeed, we then have $z_{\alpha_t}=-\sqrt{2n_t\deltaSAM}$ and thus $\alpha_t=\Phi(-\sqrt{2n_t\deltaSAM})$, where $\Phi$ is the cumulative distribution function of $\mathcal{N}(0,1)$. Another simple approach can instead consist of choosing a fixed value of pessimism $\alpha\in (0,\frac{1}{2})$ for the whole learning process, and then compute the bound $\deltaSAM^t= \frac{z_{\alpha}^2}{2n_t}$ at step $t$, which effectively decreases as we progress through learning.   
Fig.~\ref{fig:alpha_delta_main} compares these two different approaches to scheduling pessimism. Figure \ref{fig:alpha_delta_main} shows that both of these schedules allow \MethodName{} to consistently outperform the vanilla algorithm, highlighting the robustness of the proposed adjustment with respect to the choice of $\delta^t_{\text{Down}}$ which controls the level of pessimism. More details and analysis are provided in Appendix~\ref{sec:alpha_delta}.

% \textit{Fixed \(\delta_{down}\) versus Fixed \(\alpha\):} 
% Fig.~\ref{fig:alpha_delta_main} compares two variants of \MethodName{}: one where $\delta_{\text{down}}$ is held fixed throughout training, and another where $\delta_{\text{down}}$ is adaptively decayed according to $\delta_{\text{down}} = \frac{z_{\alpha}^2}{2n}$, as motivated by Proposition~\ref{Epistemic_eps}. In the latter case, $n$ is taken to be the total number of state–action pairs encountered by the agent during learning.

% The results show that both variants of \MethodName{} consistently outperform the vanilla algorithm, highlighting the robustness of the proposed adjustment. However, estimating the true value of $n$ is challenging in practice—especially in continuous state spaces and with highly non-linear function approximators, where the notion of an “effective” number of samples is ambiguous. In such settings, using a fixed $\delta_{\text{down}}$ as a hyperparameter provides a simpler and more reliable alternative, avoiding the need for potentially unstable or poorly estimated decay schedules while still capturing the benefits of the approach. More details and analysis are provided in Appendix~\ref{sec:alpha_delta}.

% \end{revisionblock}

%% file: iclr2026/sections/related_work.tex
\vspace{-1em}
\section{Related Work}
\vspace{-0.5em}
\textit{Uncertainty in RL:}
RL agents must contend with aleatoric (outcome) and epistemic (model/parameter) uncertainty \citep{bootDQN,ghavamzadeh2015bayesian}. Bayesian model-based methods propagate posterior uncertainty for exploration and risk assessment \citep{ghavamzadeh2015bayesian}; model-free approaches use ensembles/randomized value functions for epistemic estimates \citep{bootDQN,rpf, iv_rl} and distributional critics for risk-sensitive control \citep{dist_rl,iqn}. Pessimistic objectives further bias learning toward conservative estimates \citep{kumar2020cql}.

\textit{Sharpness-Aware Optimization:}
Sharpness-aware objectives minimize the worst-case (or stochastic) loss in a local \emph{noisy} parameter neighborhood~\citep{noisy_neighbor}, yielding solutions that are insensitive to weight perturbations and thus more robust to parameter uncertainty \citep{sam,understand_sam,sam_bayes}. Geometry-aware variants \citep{asam, fsam} tailor the neighborhood to better approximate uncertainty directions, while noisy-neighborhood extensions explicitly inject randomness into the perturbation to probe loss under parameter noise and improve robustness \citep{noisy_sam}. \citep{flat_rl} shows that applying SAM to PPO significantly improves robustness to aleatoric variations of the environment at test time. In this paper, we focus instead on the potential of sharpness aware optimization for safe exploration. % problem and are interested in being risk-averse to epistemic uncertainty during learning.  

% SAM \citep{sam} minimizes worst-case loss in a small parameter neighborhood, preferring flatter minima and better generalization; this aligns with flatness–generalization links and landscape-smoothing methods \citep{keskar2016large,li2018visualizing,chaudhari2017entropy,izmailov2018averaging}. Variants adapt perturbations to scale/geometry and analyze SAM’s low-sharpness bias \citep{kwon2021asam,andriushchenko2022understanding}, motivating its use as a robustness regularizer beyond supervised learning.

\textit{Safe Exploration:}
Safe exploration deals with epistemic uncertainty associated with data scarce regions of the state-space\citep{cpo,cvpo,sauteRL,pid_lag,srpl,gu2024review}. Strategies include Bayesian model-based control with uncertainty-aware planning \citep{thesis_berkenkamp}, Lagrangian world models \citep{huang2023safe}, safety critics that filter risky actions or impose constraints \citep{cscadj,learning_to_be_safe,ldm,csc}, and state augmentation with accumulated cost as a risk proxy \citep{sauteRL,sauteadj}. In this paper, we are interested specifically in the epistemic uncertainty of the actor and in designing update rule to be pessimistic with respect to that uncertainty.

% \vspace{-1em}

%% file: iclr2026/sections/conclusion.tex
% \vspace{-1em}
\section{Conclusion}
\vspace{-0.5em}
Safe exploration is ultimately about shaping updates to respect uncertainty, not only enforcing constraints on the final policy. By instantiating SAM in the actor’s Fisher/KL geometry, we treat \textit{sharpness}—the sensitivity of the policy surrogate to tiny parameter changes—as a practical proxy for epistemic uncertainty. When small nudges can flip outcomes, the local neighborhood is under-supported by data; \method{} biases learning toward locally stable policies whose nearby variants behave similarly, tempering brittle, risk-seeking moves and emphasizing improvements that reliably reduce cost. This creates a closed loop: safer updates yield safer trajectories, which produce cleaner data in risky regions, reducing epistemic uncertainty and naturally relaxing conservatism as learning progresses.

% Safe exploration is fundamentally a problem of making learning updates that respect uncertainty, not just enforcing constraints at the final policy. Our view centers the actor’s epistemic uncertainty—what the policy does not yet know due to limited data—and encodes it directly into the update rule via SAM. Interpreting sharpness (sensitivity to small parameter changes) in a Fisher/KL geometry gives a practical proxy for epistemic risk: if tiny nudges can flip outcomes, the local neighborhood is poorly supported by data and the update should be cautious there.

% This yields a simple mechanism for safer learning: prefer locally stable policies whose nearby parameter variants behave similarly. Such updates reshape the gradient signal itself—dampening fragile, risk-seeking improvements and elevating signals that consistently reduce cost—so the visited policies during training accrue fewer violations and exhibit lighter tails in episodic cost. In short, SHAPO does not merely regularize for generalization; it regularizes behavior during exploration.

%% file: iclr2026/sections/appendix_arxiv.tex
\begin{algorithm}
    \caption{Sharpness-aware optimization}
    \begin{algorithmic}[1]
        \STATE Initialize $\theta$
        \WHILE{not converged}
            \STATE Compute $\epsilon$ such that $\pi_{\theta+\epsilon} \approx \argmin_{\tilde{\pi} \in N(\pi)} \mathcal{L}(\tilde{\pi})$
            \STATE Calculate the gradient at perturbed parameter: $g \gets \nabla \mathcal{L}(\theta + \epsilon)$
            \STATE Update parameters: $\theta \gets \theta + \alpha g$ \COMMENT{$\alpha$ is the learning rate}
        \ENDWHILE
    \end{algorithmic}
    \label{alg:SAO}
\end{algorithm}

\begin{algorithm}
    \caption{\MethodName{}-TRPOLag (Trust Region Policy Optimization)}
    \begin{algorithmic}[1]
        \STATE Initialize parameters $ \theta $
        \STATE Set trust region constraint $\deltaUp$ and \MethodName{} constraint $\deltaSAM$
        \WHILE{not converged}
            \STATE Compute the Fisher matrix $F$ at $\theta$
            \STATE Compute the gradient: $ g \gets \nabla_\theta \ALoss{\pi_\theta}(\theta)\vert_{\theta} $
            % \STATE Compute Fisher information matrix: $ F \gets \nabla_\theta \nabla_\theta L_{\pi_\theta}(\theta) $
            \STATE Compute the perturbation: $\textcolor{red}{\epsilonSAM}=U_F(\textcolor{red}{-}g,\deltaSAM)$%$ \epsilon_{\text{SAM}} \gets F^{-1} g $
            \STATE Compute the gradient at perturbed parameter: $\textcolor{red}{\tilde{g}} \gets  \nabla_\theta \ALoss{\pi_\theta} (\theta) \vert_{\theta + \textcolor{red}{\epsilonSAM}} $
            \STATE Compute the parameter update: $\epsilonUp=U_F(\textcolor{red}{\tilde{g}},\deltaUp)$
            \STATE Update parameters: $ \theta \gets \theta+\epsilonUp$
             
        \ENDWHILE
        \STATE Return policy $\pi_\theta$
    \end{algorithmic}
    \label{SAM-TRPO}
\end{algorithm}

\begin{algorithm}
    \caption{\MethodName{}-PPO (Proximal Policy Optimization)}
    \begin{algorithmic}[1]
        \STATE Initialize policy parameters $\theta$
        \STATE Set clipping parameter $\epsilon$ and \MethodName{} constraint $\deltaSAM$
        \STATE Set learning rate $\eta$
        \WHILE{not converged}
            \STATE Compute the Fisher matrix F at $\theta$
            \STATE Compute SHAPO gradient for the clipped objective:
            \(
                \textcolor{red}{\tilde{g}} \gets 
                \textit{SHAPO}(\theta, L_{\pi_{\theta}}^{\text{PPO}}, F, \delta_{down}) 
            \)

            \STATE Update parameters with the \MethodName{} gradient:
            \(
                \theta \gets \theta + \eta\, \textcolor{red}{\tilde{g}}.
            \)

        \ENDWHILE
    \end{algorithmic}
    \label{alg:shapo-ppo}
\end{algorithm}

\section{Pseudo-code}
\label{sec:pseudo}
Pseudo-code for Sharpness Aware Optimization is presented in Algorithm~\ref{alg:SAO} and for our Sharpness Aware Policy Optimization for Lagrangian TRPO (\MethodName{} TRPOLag) in Algorithm~\ref{SAM-TRPO}.

Recall that $U_F(g,\delta)$ stands for the (estimated) solution to the optimization problem \ref{TRPO}\begin{align}\label{TRPO-app}
U_{F}(g,\delta)= \argmax_{\frac{1}{2} \epsilon^T F \epsilon\leq \delta } \langle g, \epsilon\rangle.
\end{align}
where $\delta>0$, $g$ is a vector and $F$ is a symmetric positive definite matrix. In TRPO at current $\theta_0$, $g$ is the gradient of $\Obj{\pi_{\theta_0}}(\theta)$ evaluated at $\theta_0$ and $F$ is the Fisher information matrix $\Fisher{\theta_0}$ of the policy parametrization $\pi_\theta, \theta\in \Theta$, evaluated at $\theta_0$.

We provide pseudo-code for \MethodName{} in Algorithm~\ref{SHAPO}. Formulated this way, our method can be applied to most on-policy RL algorithms. It returns a parameter direction $\tilde{g}$ for updating the current policy $\pi_\theta$ based on the objective function, the Fisher matrix of the policy parametrization and constraint bound $\deltaSAM$. This function effectively abstracts lines $4-7$ of Algorithm~\ref{SAM-TRPO} and allows us to present how our method applies to other algorithms.

While TRPOLag combines cost and reward into a Lagrangian objective $\ALoss{\pi_\theta}$ at the current policy $\pi_0$ (cf. Equation~\ref{ObjectiveFunction}), other algorithms like CPO or CRPO consider two distinct objectives. Given the current policy $\pi_0$ we define the two following objectives similarly to Equation~\ref{ObjectiveFunction}:
\begin{align}
\Obj{\pi_0}^{r} (\theta) &= \E_{s\sim d^{{0}}}\E_{a\sim \pi_{0}(a|s)} \frac{A^0_r(s,a)}{\pi_{0}(a|s)}\pi_\theta(a|s)\\
\Obj{\pi_0}^{c} (\theta) &= \E_{s\sim d^{0}}\E_{a\sim \pi_{0}(a|s)} \frac{A^0_c(s,a)}{\pi_{0}(a|s)}\pi_\theta(a|s),
\end{align}
where $d^0$ denotes the discounted state distribution under $\pi_0$ and $A^0_r(a,s)$ (resp.$A^0_c$) quantifies the advantage in reward (resp. cost) of taking action $a$ in state $s$ compared to the average action under $\pi_0$ in state $s$. More precisely, letting $Q_r^0(s,a)=E_{\tau \sim \pi_0}[R_r(\tau)| s_0=s, a_0=a]$ for the reward action-value function associated with $\pi_0$, where $R_r(\tau)=\sum_{t=0}^{\infty} \gamma^t r(s_t,a_t)$, we have 
$\Adv(s,a)=Q_r^{0}(s,a)-E_{a\sim \pi_0(~|s)} [Q_r^0(s,a)]$ and similarly for cost with  $R_c(\tau)=\sum_{t=0}^{\infty} \gamma^t c(s_t,a_t)$.

While CRPO's update relies on a single gradient direction for either improving reward or decreasing cost (see Algorithm~\ref{SAM-CRPO}) at each step, CPO relies on both pieces of information through an update function that we denote by \textit{CPO\_Update} (see Algorithm~\ref{SAM-CPO}).
In both cases, our method \MethodName{} provides gradient directions for both reward and cost following the same steps, thereby incorporating pessimism in face of epistemic uncertainty of the actor into these algorithms.  Since SauteRL~\citep{sauteRL} is a state-augmentation technique, it can use any underlying policy gradient algorithm. We've used TRPO~\citep{pmlr-v37-schulman15} as the base algorithm for SauteRL and the \MethodName{} version can be derived from Algorithm~\ref{SAM-TRPO}. 
\revision{In Algorithm~\ref{alg:shapo-ppo}, we propose a simple approach to apply \MethodName{} on the popular PPO algorithm that uses the following clipped objective (for some $\epsilon>0$):
\[ 
L_{\pi_0}^{\text{PPO}}(\theta) 
                = \E_{s\sim d^{0}}\E_{a\sim \pi_{0}(a|s)} \Big[ 
                \min\big( r(\theta, a, s) A^0(s,a),\,
                         \mathrm{clip}(r(\theta, a, s),1-\epsilon,1+\epsilon) A^0(s,a)
                \big) \Big],
\]
where $r(\theta, a, s) = \frac{\pi_\theta(a \mid s)}{\pi_{0}(a \mid s)}$.
%We've also provided pseudo-code for applying \MethodName{} to PPO in Algorithm~\ref{alg:shapo-ppo}.
}

\begin{algorithm}[t]
    \caption{\MethodName{}-CRPO (Constraint-rectified policy optimization)}
    \begin{algorithmic}[1]
        \STATE Initialize parameters $ \theta $
        % \STATE Reward surrogate $\Obj{\pi_\theta}^{r} (\theta) = \E_{s\sim d^{\pi_{\theta_{old}}}}\E_{a\sim \pi_{\theta_{old}}(a|s)} \frac{A^r(s,a)}{\pi_{\theta_{old}}(a|s)}\pi_\theta(a|s).$
        % \STATE Cost surrogate $\Obj{\pi_\theta}^{c} (\theta) = \E_{s\sim d^{\pi_{\theta_{old}}}}\E_{a\sim \pi_{\theta_{old}}(a|s)} \frac{A^c(s,a)}{\pi_{\theta_{old}}(a|s)}\pi_\theta(a|s).$
        \STATE Set trust region constraint $\deltaUp$ and \MethodName{} constraint $\deltaSAM$
        \WHILE{not converged}
            \STATE Compute the Fisher matrix $F$ at $\theta$
            \IF{ Expected Cost $>$ constraint threshold ($\beta$)}
                \STATE Compute SHAPO gradient on Cost Surrogate  $\textcolor{red}{\tilde{g}} \gets %$ \textit{SHAPO}($\deltaSAM$, $\Obj{\pi_\theta}^{c}$)
                 \textit{SHAPO}(\theta, \Obj{\pi_\theta}^{c}, F, \deltaSAM)$
            \ELSE
                \STATE Compute SHAPO gradient on Reward Surrogate  $\textcolor{red}{\tilde{g}} \gets %$ \textit{SHAPO}($\deltaSAM$, $\Obj{\pi_\theta}^{r}$)
                   \textit{SHAPO}(\theta, \Obj{\pi_\theta}^{r}, F, \deltaSAM)$
            \ENDIF
            \STATE Compute the parameter update: $\epsilonUp=U_F(\textcolor{red}{\tilde{g}},\deltaUp)$
            \STATE Update parameters: $ \theta \gets \theta+\epsilonUp$
        \ENDWHILE
        \STATE Return policy $\pi_\theta$
    \end{algorithmic}
    \label{SAM-CRPO}
\end{algorithm}

\begin{algorithm}[t]
    \caption{\MethodName{}-CPO (Constrained Policy Optimization)}
    \begin{algorithmic}[1]
        \STATE Initialize parameters $ \theta $
        % \STATE Reward surrogate $\Obj{\pi_\theta}^{r} (\theta) = \E_{s\sim d^{\pi_{\theta_{old}}}}\E_{a\sim \pi_{\theta_{old}}(a|s)} \frac{A^r(s,a)}{\pi_{\theta_{old}}(a|s)}\pi_\theta(a|s).$
        % \STATE Cost surrogate $\Obj{\pi_\theta}^{c} (\theta) = \E_{s\sim d^{\pi_{\theta_{old}}}}\E_{a\sim \pi_{\theta_{old}}(a|s)} \frac{A^c(s,a)}{\pi_{\theta_{old}}(a|s)}\pi_\theta(a|s).$
        \STATE Set trust region constraint $\deltaUp$ and \MethodName{} constraint $\deltaSAM$
        \WHILE{not converged}
            \STATE Compute Fisher matrix $F$ at $\theta$
            \STATE Compute SHAPO gradient on Cost Surrogate $\textcolor{red}{\tilde{g}_r} \gets  \textit{SHAPO}(\theta, \Obj{\pi_\theta}^{r}, F, \deltaSAM)$
            \STATE Compute SHAPO gradient on Reward Surrogate  $\textcolor{red}{\tilde{g}_c} \gets  \textit{SHAPO}(\theta, \Obj{\pi_\theta}^{c}, F, \deltaSAM)$

            \STATE Resolve gradient update $\epsilonUp  \gets $ \textit{CPO\_Update}($\textcolor{red}{\tilde{g}_r}$, $\textcolor{red}{\tilde{g}_c}$, $\deltaUp$)
            
            \STATE Update parameters: $ \theta \gets \theta+\epsilonUp$
        \ENDWHILE
        \STATE Return policy $\pi_\theta$
    \end{algorithmic}
    \label{SAM-CPO}
\end{algorithm}

\section{Reinterpreting Fisher SAM as Pessimism in Face of Epistemic Uncertainty}
\label{sec:fischer}
For the sake of completeness, we prove first the following well known result (cf. \cite{kakade2001natural, pmlr-v37-schulman15}):
\begin{lemma}\label{lem:TRPO}
   The solution to the optimization problem
\begin{align}\label{TRPOapp}
U_{F}(g,\delta)= \argmax_{\frac{1}{2} \epsilon^T F \epsilon\leq \delta } \langle g, \epsilon\rangle
\end{align}  
is given by
$U_{F}(g,\delta)=\frac{\sqrt{2\delta}}{ \sqrt{g^TF^{-1}g}}F^{-1}g$ where $F^{-1}$ is the inverse of $F$, which is assumed to be symmetric positive definite.
\end{lemma}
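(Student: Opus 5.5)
The plan is to reduce the constrained linear maximization to a Cauchy--Schwarz inequality in the inner product induced by $F$. Since $F$ is symmetric positive definite, it has a symmetric positive definite square root $F^{1/2}$ and an inverse $F^{-1/2}$. I will substitute $u=F^{1/2}\epsilon$, so that the constraint $\frac{1}{2}\epsilon^TF\epsilon\leq\delta$ becomes $\|u\|_2\leq\sqrt{2\delta}$. Writing $\langle g,\epsilon\rangle=\langle F^{-1/2}g,u\rangle$ then recasts the problem as maximizing a linear functional over a Euclidean ball. The case $g=0$ is degenerate: every feasible $\epsilon$ is optimal and the stated formula is undefined. I will therefore assume $g\neq0$, which gives $g^TF^{-1}g>0$ by positive definiteness.

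For the upper bound, Cauchy--Schwarz gives
\[
\langle F^{-1/2}g,u\rangle\leq \|F^{-1/2}g\|_2\,\|u\|_2\leq \sqrt{g^TF^{-1}g}\,\sqrt{2\delta}
\]
for every feasible $u$. Equality holds if and only if $u$ is a nonnegative multiple of $F^{-1/2}g$ with norm exactly $\sqrt{2\delta}$. This forces the unique choice $u^*=\sqrt{2\delta}\,F^{-1/2}g/\|F^{-1/2}g\|_2$. Mapping back with $\epsilon=F^{-1/2}u$ yields $\epsilon^*=\frac{\sqrt{2\delta}}{\sqrt{g^TF^{-1}g}}F^{-1}g$, which is the claimed formula.

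Finally, I will check directly that $\frac{1}{2}(\epsilon^*)^TF\epsilon^*=\delta$, so the constraint is active, and that $\langle g,\epsilon^*\rangle=\sqrt{2\delta\,g^TF^{-1}g}$, which attains the bound. This confirms optimality. Uniqueness follows from the equality case of Cauchy--Schwarz, so the $\argmax$ is a single point.

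An alternative is the KKT/Lagrangian route. Setting $g=\mu F\epsilon$ gives $\epsilon\propto F^{-1}g$, and $\mu$ is then fixed by the active constraint. I prefer the Cauchy--Schwarz route because it also yields global optimality and uniqueness without appealing to convexity separately. No step is a real obstacle. The only care needed is the degenerate case $g=0$ and the justification that $F^{1/2}$ exists, which follows from the spectral theorem for symmetric positive definite matrices.
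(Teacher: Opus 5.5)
Your proof is correct, but it follows a different route from the paper's. The paper uses the Lagrangian $\langle g,\epsilon\rangle-\lambda\bigl(\frac{1}{2}\epsilon^TF\epsilon-\delta\bigr)$. It solves the stationarity condition $g=\lambda F\epsilon$ to get $\epsilon=\frac{1}{\lambda}F^{-1}g$, imposes the constraint with equality, and solves $\frac{1}{2\lambda^2}g^TF^{-1}g=\delta$ for $\lambda$. That derivation is short and mirrors the Lagrangian argument the paper reuses for the next proposition in the appendix. It does, however, leave three steps implicit. First, it assumes the constraint is active. Second, it takes the positive root for $\lambda$ without comment; the negative root yields $-\epsilon^*$, which is the minimizer. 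Third, it does not argue that a stationary point of the Lagrangian is a global maximizer, which would need KKT sufficiency for this convex problem with $\lambda\geq 0$. Your whitening substitution $u=F^{1/2}\epsilon$ followed by Cauchy--Schwarz settles all three at once. It gives a global upper bound, shows the bound is attained, and yields uniqueness of the $\arg\max$ from the equality case. You also handle the degenerate case $g=0$, which the paper never mentions. Your route costs only the existence of $F^{1/2}$ from the spectral theorem. The paper's route buys a mechanical template that transfers directly to other constrained problems in the appendix.
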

\begin{proof}
    We introduce a Lagrange multiplier $\lambda$ for the constraint $\frac{1}{2} \epsilon^T F \epsilon\leq \delta$ and define the following Lagrangian:
    $$
    \mathcal{L}(\epsilon,\lambda)=\langle g, \epsilon\rangle-\lambda \bigl(\frac{1}{2} \epsilon^T F \epsilon-\delta\bigr).
    $$
    We then find the stationary point $\mathcal{L}(\epsilon,\lambda)$. First, we have:
    $$
    0=\nabla_\epsilon \mathcal{L}(\epsilon,\lambda)=g -\lambda F\epsilon
    $$
    which leads to $\epsilon=\frac{1}{\lambda}F^{-1}g$. Next we have:
    $$
    0=\nabla_\lambda \mathcal{L}(\epsilon,\lambda)=\frac{1}{2} \epsilon^T F \epsilon-\delta,
    $$
    that leads to the constraint $\frac{1}{2} \epsilon^T F \epsilon=\delta$.
    Substituting $\epsilon=\frac{1}{\lambda}F^{-1}g$ into the constraint, and using the fact that $F^{-1}$ is also symmetric, we find:
    $$
    \frac{1}{2\lambda^2} g^TF^{-1} g=\delta  \quad \Rightarrow \quad\lambda=\sqrt{\frac{{g^TF^{-1} g}}{{2\delta}}}
    $$
    Consequently, as desired:
    $$
    \epsilon=\sqrt{\frac{{2\delta}}{{g^TF^{-1} g}}}F^{-1}g.
    $$
\end{proof}

Recall that $Q(\theta)$ is assumed to follow a multivariate normal distribution $\mathcal{N}(\theta_0,\frac{1}{n}F^{-1})$ where $F^{-1}$ is the inverse of the Fisher matrix at $\theta_0$. Recall that $Y=g^t(\theta-\theta_0)$ with $\theta\sim Q$ with $\alpha$-quantiles denoted by $y_\alpha$. Finally $z_\alpha$, denotes the $\alpha$-quantile for $\mathcal{N}(0,1)$.
We now provide the proof of Proposition~\ref{prop:SHAPOdual} which is illustrated in Figure~\ref{fig:DualFisher}:
\begin{proposition}
    Let $Q(\theta)=\mathcal{N}(\theta_0,\frac{1}{n}F^{-1})$. For every $\alpha\in(0,\frac{1}{2})$ the solution $\epsilon_\alpha$ to the optimization problem
    \begin{align}%\label{Epistemic_eps}
\begin{split}
    \maximize_{\epsilon} ~&  \log Q(\theta_0+\epsilon)\\
        \text{subject to}~ &  g^T\epsilon \leq y_\alpha
    % \text{subject to}~ & P_Q(\{\theta| L_{\pi_0}(\theta)\leq L_{\pi_0}(\theta_0+\epsilon))\leq \alpha
\end{split}
\end{align}
coincides with the adjustment we make in \MethodName{}, namely we have $\epsilonSAM= U_{F}(-g,\deltaSAM)$ when $\deltaSAM=\frac{z_\alpha^2}{2n}$. 
    % coincides with $\epsilonSAM= U_{\theta_0}(g,\deltaSAM)$ for $\deltaSAM=z_\alpha\frac{1}{\sqrt{n}}\Vert g\Vert_{\Fisher{}^{-1}}$ where $z_\alpha$ is the quantile of $\alpha$ for $\mathcal{N}(0,1)$.
\end{proposition}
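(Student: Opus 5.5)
The plan is to reduce both problems to closed forms and compare them. First, maximizing $\log Q(\theta_0+\epsilon)$ is the same as minimizing the Mahalanobis quadratic $\frac{n}{2}\epsilon^T F\epsilon$, because the normalizing constant of $\mathcal{N}(\theta_0,\frac{1}{n}F^{-1})$ does not depend on $\epsilon$. So the problem becomes: minimize $\frac12\epsilon^TF\epsilon$ subject to $g^T\epsilon\le y_\alpha$.

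Next I compute $y_\alpha$ explicitly. Since $Y=g^T(\theta-\theta_0)$ with $\theta\sim Q$, we have $Y\sim\mathcal{N}(0,\sigma^2)$ with $\sigma^2=\frac1n g^TF^{-1}g$. Hence $y_\alpha=\sigma z_\alpha=\frac{z_\alpha}{\sqrt n}\sqrt{g^TF^{-1}g}$, which is strictly negative because $\alpha<\frac12$ (assuming $g\neq0$). Negativity matters: it means $\epsilon=0$ is infeasible, so the constraint must be active at the optimum. I would justify activity directly. If $g^T\epsilon<y_\alpha$, then shrinking $\epsilon$ by a factor $t<1$ keeps the point feasible while strictly lowering the positive-definite quadratic.

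With the constraint active, I solve the problem of minimizing $\frac12\epsilon^TF\epsilon$ subject to $g^T\epsilon=y_\alpha$ by Lagrange multipliers, mirroring the proof of Lemma~\ref{lem:TRPO}. Stationarity gives $F\epsilon=\mu g$, so $\epsilon=\mu F^{-1}g$. Substituting into the constraint yields $\mu=y_\alpha/(g^TF^{-1}g)$, so
\[
\epsilon_\alpha=\frac{z_\alpha}{\sqrt n\,\sqrt{g^TF^{-1}g}}\,F^{-1}g .
\]
Uniqueness follows from strict convexity of the objective over a convex feasible set.

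Finally, I evaluate $U_F(-g,\delta)$ using Lemma~\ref{lem:TRPO}. Since $(-g)^TF^{-1}(-g)=g^TF^{-1}g$, this gives $U_F(-g,\delta)=-\frac{\sqrt{2\delta}}{\sqrt{g^TF^{-1}g}}F^{-1}g$. Setting $\delta=\frac{z_\alpha^2}{2n}$ gives $\sqrt{2\delta}=|z_\alpha|/\sqrt n=-z_\alpha/\sqrt n$, because $z_\alpha<0$. This matches $\epsilon_\alpha$ exactly.

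The only delicate points are the sign bookkeeping ($z_\alpha<0$ and $|z_\alpha|=-z_\alpha$) and arguing that the inequality constraint is active. The degenerate case $g=0$ should be excluded or handled separately: there $y_\alpha=0$, both sides equal $0$, and $U_F$ is understood as $0$.
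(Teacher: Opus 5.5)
Your proposal is correct and follows essentially the same route as the paper: reduce $\log Q$ to the quadratic $-\frac{n}{2}\epsilon^T F\epsilon$, solve the constrained problem with a Lagrange multiplier to get a multiple of $F^{-1}g$, substitute $y_\alpha=\sigma z_\alpha$, and match against the closed form of $U_F(-g,\delta)$ from Lemma~\ref{lem:TRPO}. Your argument that the constraint is active and your sign bookkeeping ($\sqrt{2\delta}=-z_\alpha/\sqrt{n}$) are cleaner than the paper's write-up, which drops a sign when solving for $\lambda$ and then matches via $z_\alpha/\sqrt{n}=\sqrt{2\delta}$; these two slips cancel only after squaring.
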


\begin{proof}
    Note that the precision matrix of $Q(\theta)$ is equal to $nF$ and so
    $$
    \log Q(\theta_0+\epsilon)=-\frac{n}{2} \epsilon^TF\epsilon +\text{constant.}
    $$
    We introduce a Lagrange multiplier $\lambda$ for the constraint $g^T\epsilon \leq y_\alpha$ and define the following Lagrangian:
    $$
    \mathcal{L}(\epsilon,\lambda)=-\frac{n}{2} \epsilon^TF\epsilon-\lambda \bigl(g^T\epsilon - y_\alpha\bigr).
    $$
    Next, we look for the stationary points of $\mathcal{L}$. First, 
    $$
    0=\nabla_\epsilon \mathcal{L}(\epsilon,\lambda)=-nF\epsilon-\lambda g
    $$
    which implies $\epsilon=-\frac{\lambda}{n}F^{-1}g$. Next we see that
    $$
    0=\nabla_\lambda \mathcal{L}(\epsilon,\lambda) \quad \Rightarrow \quad g^T\epsilon=y_\alpha.
    $$
    Therefore, 
    $$ y_\alpha=\frac{\lambda}{n}g^TF^{-1}g \quad \text{and so} \quad \lambda=\frac{ny_\alpha}{g^TF^{-1}g}.
    $$
     Consequently, we obtain
    $$
    \epsilon=-\frac{y_\alpha}{g^TF^{-1}g }F^{-1}g.
    $$
    Now using the fact that $y_\alpha=\sigma z_\alpha$ where $\sigma=\frac{1}{\sqrt{n}}\sqrt{g^TF^{-1}g}$ is the standard deviation of $Y=g^t\epsilon$ with $\epsilon\sim Q(\theta)$, we get
    $$
        \epsilon=-\frac{z_\alpha}{\sqrt{ng^TF^{-1}g} }F^{-1}g.
    $$
    By Lemma~\ref{lem:TRPO}, we have 
    \[
    U_F(-g,\delta)=-\sqrt{\frac{{2\delta}}{{g^TF^{-1} g}}}F^{-1}g,
    \]
    which is equal to $\epsilon$ when $\frac{z_\alpha}{\sqrt{n}}=\sqrt{2\delta}$, namely when $\delta= \frac{z_\alpha^2}{2n}$, as desired. 
    % $\frac{z_\alpha}{\sqrt{n}}=\sqrt{2\delta}$ so 
%     In addition, we note that if $z_\alpha$ denotes the $\alpha$-quantile of the normal distribution $\mathcal{N}(0,1)$, then $y_\alpha=\sigma z_\alpha$ where $\sigma=\frac{1}{\sqrt{n}}\sqrt{g^TF^{-1}g}$. Therefore, taking $$
% \delta= \frac{\sigma^2z_\alpha^2}{4g^TF^{-1}g}=\frac{\sigma^2z_\alpha^2}{g^TF^{-1}g}=\frac{z_\alpha^2}{n}
% $$
\end{proof}

We observe that the relation \( \deltaSAM = \frac{z_\alpha^2}{2n} \) offers an attractive perspective on the trust constraint for the inner minimization in \MethodName{}. This equation expresses the bound in terms of the \( \alpha \)-quantile of the standard normal distribution \( \mathcal{N}(0,1) \) and \( n \), which represents the sample size used for estimating \( \theta_0 \). 
For a given sample size \( n \), selecting a confidence level—such as the 5th percentile—yields a specific value for \( \deltaSAM \). This approach implies that one can initially set \( \deltaSAM^\text{init}=z_\alpha^2 \) by determining an appropriate confidence level $\alpha\in(0,\frac{1}{2})$. As the actor gains more experience, it would appear recommended to decrease \( \deltaSAM \) according to $\deltaSAM=\frac{\deltaSAM^\text{init}}{2n}$, reflecting the increasing trust in the estimates as more data becomes available. This strategy not only enhances the robustness of the minimization process but also aligns the trust constraint with the actor's growing experience.

\section{Analysis of SHAPO on a simple Gaussian Policy}
\label{A:Gaussian}

Our proposed approach to update the current policy $\pi_0$ is guided by the gradient $\tilde{\mathbf{g}}$ of $\ALoss{\pi_0}(\theta)$ evaluated at $\theta_0+\epsilonSAM$, while classical approaches use the gradient $\mathbf{g}$ evaluated at $\theta_0$. Here we study how these two gradients $\mathbf{g}$ and $\tilde{\mathbf{g}}$ differ in a simplified setting. We observe that the contribution of a single state-action pair $(s,\mathbf{a})$ towards $\ALoss{\pi_0}$ is given by:
$$
\ALoss{\pi_0}(\theta|s,\mathbf{a})=\frac{ A^0(s,\mathbf{a})}{\pi_0(\mathbf{a}|s)} \pi_\theta(\mathbf{a}|s),
$$
whose gradient with respect to $\theta$ is given by
$$
\nabla_\theta \ALoss{\pi_0}(\theta|s,\mathbf{a})=\frac{ A^0(s,\mathbf{a})}{\pi_0(\mathbf{a}|s)} \nabla_\theta \pi_\theta(\mathbf{a}|s)
$$
Each observed state-action pair therefore contributes towards the gradient, to increase or decrease the likelihood of action $\mathbf{a}$ depending on the sign of $A^0(s,\mathbf{a})$ (in a way that is inversely proportional to the likelihood of that action under $\pi_0$).

Here we study how the two gradients $\mathbf{g}$ (standard) and $\tilde{\mathbf{g}}$ (\MethodName{}) differ in a simplified setting. We assume that the policy at some state is given by an $nD$ Gaussian policy $\pi(\mathbf{a}; \boldsymbol{\mu},\mathbf{I})=\mathcal{N}(\mathbf{a};\boldsymbol{\mu},\mathbf{I})$ parametrized by the mean vector $\boldsymbol{\mu}$, with probability density function given by:
\[
\pi(\mathbf{a}; \boldsymbol{\mu}) = \frac{1}{(2\pi)^{n/2}} \exp\left(-\frac{\|\mathbf{a} - \boldsymbol{\mu}\|^2}{2}\right).
\]
The gradient of $\pi(\mathbf{a};\boldsymbol{\mu})$ with respect to $\boldsymbol{\mu}$ is expressed as:
\begin{align*}
\nabla_{\boldsymbol{\mu}}\pi(\mathbf{a}; \boldsymbol{\mu}) =&{(\mathbf{a}-\boldsymbol{\mu})} \pi(\mathbf{a}; \boldsymbol{\mu}) 
\end{align*}

Assuming that the current policy is $\pi_0(\mathbf{a})=\pi(\mathbf{a};\mathbf{0},\mathbf{I})$ specified by $\boldsymbol{\mu}_0=\mathbf{0}$, we study the discrepancy between the two gradients $\mathbf{g}$ and $\tilde{\mathbf{g}}$ with respect to $\boldsymbol{\mu}$ in terms of an observed action $\mathbf{a}$ and an advantage value $A$. Our utility function $\ALoss{\pi_0}$ and its gradient can now be expressed as:
$$
\ALoss{\pi_0}(\boldsymbol{\mu};\mathbf{a},A)=\frac{A}{\pi_0(\mathbf{a})}\pi(\mathbf{a};\boldsymbol{\mu}) \quad \nabla_{\boldsymbol{\mu}} \ALoss{\pi_0}(\boldsymbol{\mu};\mathbf{a},A)=\frac{A}{\pi_0(\mathbf{a})}\nabla_{\boldsymbol{\mu}}\pi(\mathbf{a};\boldsymbol{\mu}) 
$$ 
The gradient $\mathbf{g}(\mathbf{a},A)$ at $\boldsymbol{\mu}_0$ becomes:
$$
\mathbf{g}(\mathbf{a},A)=
\nabla_{\boldsymbol{\mu}} \ALoss{\pi_0}(\boldsymbol{\mu}|\mathbf{a})\vert_{\boldsymbol{\mu}=\mathbf{0}}=\frac{A}{\pi_0(\mathbf{a})}(\mathbf{a}-\boldsymbol{\mu}_0)\pi(\mathbf{a};\boldsymbol{\mu}_0)=A\mathbf{a} 
$$
Therefore, in this situation, the standard gradient is simply the product of the advantage $A$ with the action vector $\mathbf{a}$. This means that the mean is pulled closer to $\mathbf{a}$ when the advantage is positive and pushed away in case of a negative advantage. With \MethodName{} the gradient is instead computed at $\boldsymbol{\mu}_0+\boldsymbol{\epsilon}_{\text{SAM}}$, where $\boldsymbol{\epsilon}_{\text{SAM}}$ is a parameter perturbation crafted to minimize $\ALoss{\pi_0}$. \revision{Note that with our choice of parametrization $\text{KL}(\pi(\cdot;\boldsymbol{\mu}_0,\mathbf{I})\Vert \pi(\cdot;\boldsymbol{\mu}_0+\boldsymbol{\epsilon}_{\text{SAM}},\mathbf{I}))=\frac{\|\boldsymbol{\epsilon}_{\text{SAM}}\|^2}{2}$, which is half the square of the Euclidean distance between mean parameters. %$\rho=\|\boldsymbol{\mu}_0-\tilde{\boldsymbol{\mu}}\|$ is the Euclidean distance between mean parameters.
Therefore, the approach based on the Kullback-Leibler neighborhood ($\text{KL}(\pi(\cdot;\boldsymbol{\mu}_0,\mathbf{I})\Vert \pi(\cdot;\boldsymbol{\mu}_0+\boldsymbol{\epsilon}_{\text{SAM}},\mathbf{I}))\leq \deltaSAM$) coincides perfectly in this simplifed setting with the one using the Euclidean metric on parameters ($\|\boldsymbol{\epsilon}_{\text{SAM}}\|\leq \rho$) and can be expressed in terms of $\rho=\sqrt{2\deltaSAM}$. The optimal perturbation is therefore given by:}
\[
\boldsymbol{\epsilon}_{\text{SAM}}(\mathbf{a},A) = - \rho \frac{\mathbf{g}(\mathbf{a},A)}{\|\mathbf{g}(\mathbf{a},A)\|} = - \rho \frac{A\mathbf{a}}{|A|\|\mathbf{a}\|} = - \rho \sign(A) \frac{\mathbf{a}}{\|\mathbf{a}\|}
\]
This adjustment contributes to increasing the likelihood of action $\mathbf{a}$ when the advantage $A$ is negative (unsafe action) and decreasing it when $A$ is positive. The adjusted parameter $\tilde{\boldsymbol{\mu}}(\mathbf{a},A)=\boldsymbol{\mu}_0+\boldsymbol{\epsilon}_{\text{SAM}}(\mathbf{a},A)$ is given by:
\begin{align*}
\tilde{\boldsymbol{\mu}}(\mathbf{a},A) =& -\rho \sign(A) \frac{\mathbf{a}}{\|\mathbf{a}\|}
\end{align*}
So, for example, when $A<0$, this adjusted mean vector is shifted by a magnitude of $\rho$ directly towards the action $\mathbf{a}$, thereby leading to a likelihood $\pi(\mathbf{a};\tilde{\boldsymbol{\mu}},\mathbf{I})$ larger than $\pi(\mathbf{a};\mathbf{0},\mathbf{I})$.

Now we compute the gradient $\tilde{\mathbf{g}}$ at $\tilde{\boldsymbol{\mu}}$. Let $\boldsymbol{\Delta} = \mathbf{a} - \tilde{\boldsymbol{\mu}}$. 
$$
\tilde{\mathbf{g}}(\mathbf{a},A)=
\nabla_{\boldsymbol{\mu}} \ALoss{\pi_0}(\boldsymbol{\mu}|\mathbf{a})\vert_{\boldsymbol{\mu}=\tilde{\boldsymbol{\mu}}}=\frac{A}{\pi_0(\mathbf{a})}(\mathbf{a}-\tilde{\boldsymbol{\mu}})\pi(\mathbf{a};\tilde{\boldsymbol{\mu}}) = A \boldsymbol{\Delta} \frac{\pi(\mathbf{a};\tilde{\boldsymbol{\mu}})}{\pi(\mathbf{a};\mathbf{0})}
$$
Substituting $\pi(\mathbf{a};\tilde{\boldsymbol{\mu}})$ and $\pi_0(\mathbf{a})$ back into the ratio, the exponential terms simplify to $\exp\left(\frac{\|\mathbf{a}\|^2 - \|\boldsymbol{\Delta}\|^2}{2}\right)$. As desired, replacing $\boldsymbol{\Delta} = \mathbf{a} + \rho \sign(A) \frac{\mathbf{a}}{\|\mathbf{a}\|}$ leads to the following expression for $\tilde{\mathbf{g}}(\mathbf{a},A)$:
$$
\tilde{\mathbf{g}}(\mathbf{a},A)=\begin{cases}
   {A}{\left(\mathbf{a} - \rho \frac{\mathbf{a}}{\|\mathbf{a}\|}\right)}\exp\left(\frac{\|\mathbf{a}\|^2-\|\mathbf{a} - \rho \frac{\mathbf{a}}{\|\mathbf{a}\|}\|^2}{2}\right) &\text{if $A<0$}\\[10pt]
   {A}{\left(\mathbf{a} + \rho \frac{\mathbf{a}}{\|\mathbf{a}\|}\right)}\exp\left(\frac{\|\mathbf{a}\|^2-\|\mathbf{a} + \rho \frac{\mathbf{a}}{\|\mathbf{a}\|}\|^2}{2}\right) &\text{if $A\geq 0$}.
\end{cases}
$$

\begin{figure}[t]
    \centering
    \includegraphics[width=\linewidth]{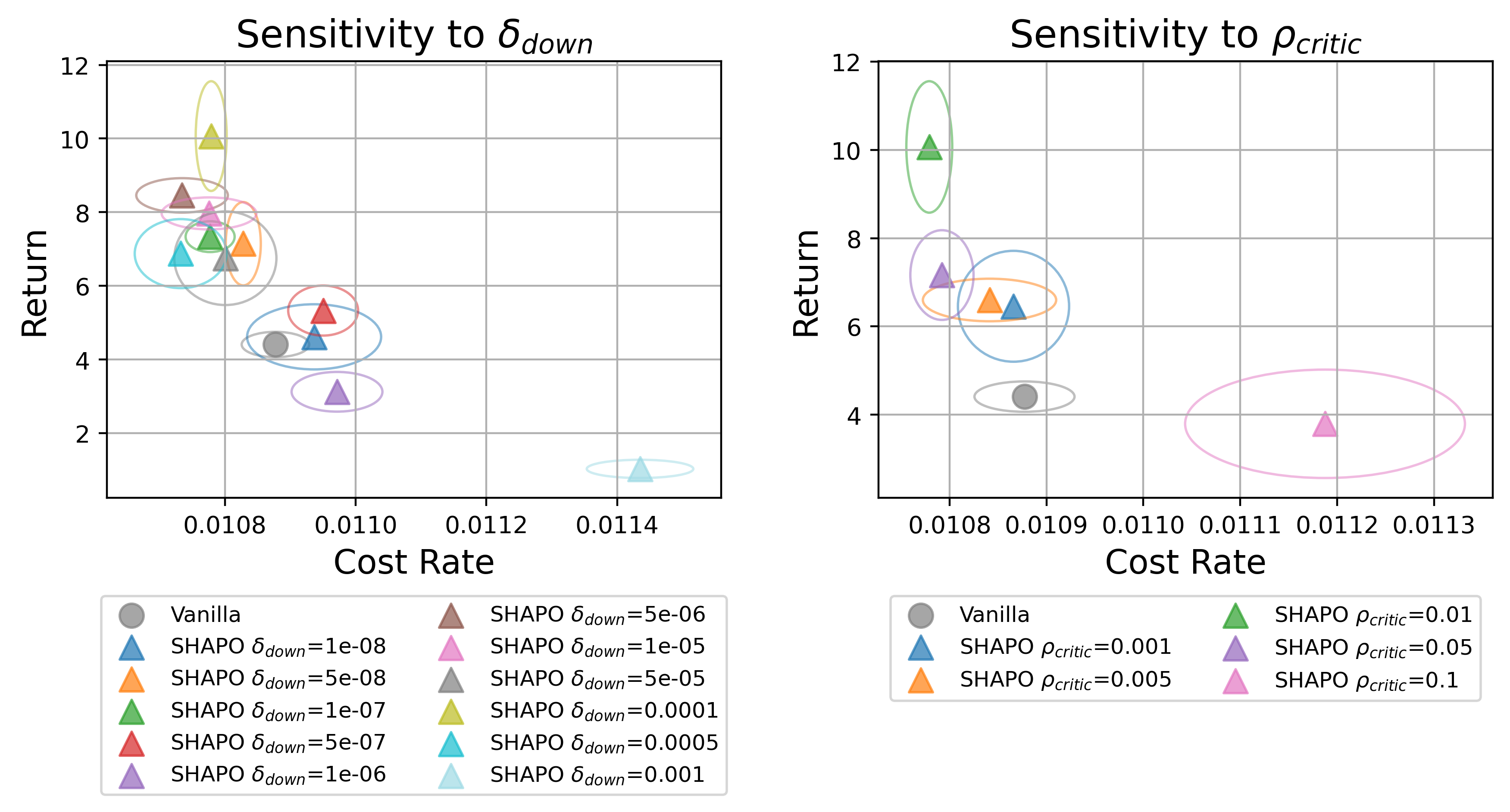}
    \vspace{-1em}
    \caption{\textit{Hyperparameter Sensitivity:} We evaluate the sensitivity of \MethodName{} to $\deltaSAM$ (left) and $\rho_{critic}$ (right). Very small $\deltaSAM$ effectively disables \MethodName{} and yields performance similar to the vanilla baseline, while overly large values destabilize training. Moderate values provide the most consistent improvements. For $\rho_{critic}$, performance is stable across a broad range, as long as the value is not excessively large.}
    \label{fig:hyperparam_sensitivity} 
\end{figure}

\section{Results}
\label{sec:detailed_results}
Return and cost curves for each baseline algorithm compared to their \MethodName{} version are presented for each of the four environments in Figures~\ref{fig:pointgoal1}, \ref{fig:pointbutton1}, \ref{fig:ant} and \ref{fig:walker}.

\section{Implementation Details \& Analysis}
\label{sec:hyper}

\subsection{Hyperparameter Details}
For the base algorithms we use the optimized hyperparameters available in open-source implementations like Omnisafe~\citep{omnisafe}\footnote{\url{https://github.com/PKU-Alignment/omnisafe}}. \MethodName{} hyperparameters $\deltaSAM$ and $\rho_{critic}$ are optimized by performing grid search between range $[0.001, 0.00001]$ for $\deltaSAM$ and $\{0.01, 0.05, 0.005\}$ for $\rho_{critic}$ while keeping the hyperparameters of the base algorithm were kept fixed. Full list of hyperparameters for each baseline and the corresponding environment have been provided here (\url{https://anonymous.4open.science/r/Safe-Policy-Optimization-813E}). Most common choice of \MethodName{} hyperparameters that gave the best performance across tasks and baselines was $\deltaSAM = 0.0001$ and $\rho_{critic} = 0.01$. There is a very clear monotonous hyperparameter sensitivity to \MethodName{} i.e. if the $\deltaSAM$ value is too small, the policy optimization is unchanged from the base algorithm thus the performance also resembles the base algorithm's performance, while if the $\deltaSAM$ is too high, the state-distributions of the original policy $d^{\pi_0}$ that generated the data and that of  perturbed policy $d^{\tilde{\pi}}$ are no longer bounded and thus its no longer possible to provide a tight bound on the deterioration of the policy required for the inner SHAPO objective, which leads to unstability and inferior performance.

\begin{revisionblock}

\subsection{Hyperparameter Sensitivity}
    To study the robustness of \MethodName{} to the choice of hyperparameters, we conduct an analysis (Fig.~\ref{fig:hyperparam_sensitivity}) varying the two key hyperparameters, $\delta_{down}$ and $\rho_{critic}$, while keeping all other settings fixed. This setup allows us to isolate the effect of each hyperparameter on safety and efficiency of the resulting policy. The results show that very small $\delta_{down}$ values effectively disable the perturbation step in \MethodName{}, causing the algorithm to behave like the vanilla baseline. Conversely, excessively large $\delta_{down}$ values lead to unstable updates and degraded performance. We observe a clear band of intermediate values that consistently improves both learning speed and final performance. In contrast, $\rho_{critic}$ exhibits much less sensitivity: performance remains stable across a wide range of values, with degradation only occurring when the regularization becomes extremely large. These trends confirm that \MethodName{} is robust to reasonable choices of $\rho_{critic}$ and requires only mild tuning of $\delta_{down}$ to achieve reliable gains.

\begin{wraptable}{r}{0.45\linewidth}
\vspace{-4em}
\centering
\begin{tabular}{lcc}
\toprule
\textbf{Metric} & \textbf{Vanilla} & \textbf{\MethodName{}} \\
\midrule
Total Runtime (mins) & 85 min & 100 min \\
Update Time (s) & 4 s & 5.5 s \\
\bottomrule
\end{tabular}
\caption{Table shows the total runtime of Vanilla (\textit{PIDLag}) and \MethodName{} (\textit{PIDLag}) on \textit{PointGoal1} task as well as time taken for individual updates. As we can see that the compute overhead is negligible in comparision to the overall time it takes to train the RL agent and most of the runtime is taken up by collecting data through policy rollouts. }
\label{tab:runtime}
\vspace{-5pt}
\end{wraptable}

\subsection{Runtime Analysis}

Here we analyze the computational overhead introduced by \MethodName{} during training across different tasks. In our experiments, overall training time is dominated by environment interaction rather than by policy or critic updates. Consequently, the additional TRPO step required for \MethodName{}’s parameter adjustment contributes only marginally to the total wall-clock time, which includes data collection, policy updates, and critic updates. Table~\ref{tab:runtime} reports the runtime for training an RL agent for 10M timesteps on the \textit{PointGoal1} task averaged across $5$ different training runs. As shown, \MethodName{} incurs only a small increase in total runtime, since most of the compute is spent on collecting on-policy rollouts. The per-update computation time for the actor and critics increases by roughly $30$ percent, but this overhead is minor compared to the dominant cost of data collection. All the experiments were performed on CPU with $10$ cores and $30$ GB of RAM. To increase the efficiency of data-collection, we use vectorized environments provided with gymnasium~\citep{gymnasium} and safety-gymnasium\citep{safety-gym}, with $5$ environments running in parallel.

\end{revisionblock}

\begin{figure}[t]
    \centering
    \includegraphics[width=\linewidth]{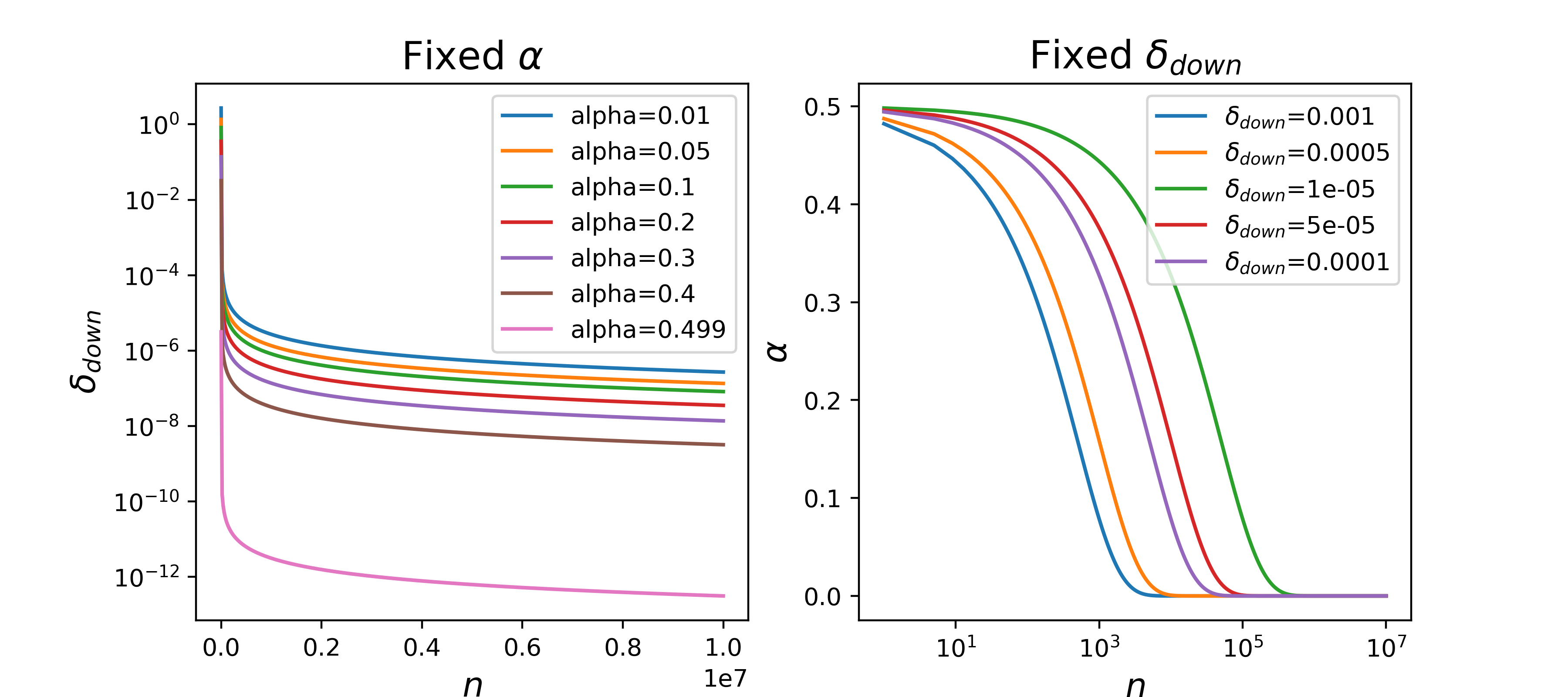}   
    \vspace{-1em}
    \caption{
\small{Comparison of pessimism schedules under \MethodName{}. 
\textbf{Left: Fixed $\alpha$.} Holding the pessimism level $\alpha$ constant causes the corresponding bound $\deltaSAM^t = z_\alpha^2/(2n_t)$ to shrink rapidly as $n_t$ grows, reducing the robustness margin over time. 
\textbf{Right: Fixed $\deltaSAM$.} Keeping $\deltaSAM$ constant makes the implied percentile $\alpha_t = \Phi(-\sqrt{2 n_t \deltaSAM})$ decrease with $n_t$, meaning the agent becomes more risk-averse as training progresses. 
This is desirable for safe exploration: early on, when data are limited, pessimism remains low to avoid overly conservative behavior, while increased pessimism later helps guard against epistemic uncertainty once sufficient experience has been gathered.
}}
    \label{fig:delta_alpha_n} 
\end{figure}

\begin{revisionblock}

\section{Pessimism schedule}
\label{sec:alpha_delta}

 In order to compare different ways of controlling the level of pessimism used in \MethodName{}, we use at each update step $t$ the number of collected state--action samples $n_t$ as a proxy for the ideal sample size $n$ introduced in Subsection~\ref{sec:sam_risk}. 
It is important to note that the true value of $n$---the number of \emph{informative, approximately independent} samples governing posterior contraction---is unknown in practice. 
The proxy $n_t$ may therefore increase much faster than the effective statistical sample size, which has direct implications for how pessimism evolves during learning.

Increasing $n_t$ affects the pessimism level differently depending on whether we fix $\deltaSAM$ or fix the target percentile $\alpha$. 
When $\deltaSAM$ is kept constant, Proposition~\ref{prop:SHAPOdual} implies that the induced percentile satisfies
\[
    z_\alpha^t = -\sqrt{2 n_t \deltaSAM}, 
    \qquad 
    \alpha_t = \Phi(z_\alpha^t),
\]
so that $\alpha_t$ decreases as $n_t$ grows. 
Thus, the agent becomes \emph{increasingly pessimistic}, focusing on more extreme tail events as training progresses. 
This behavior is desirable for safe exploration: early in learning, when data are scarce and uncertainty is large, low pessimism avoids overly conservative behavior that would suppress exploration; as more experience is gathered, increasing pessimism helps guard against refined estimates of unsafe regions. 

By contrast, if we fix a pessimism level $\alpha \in (0,\tfrac{1}{2})$, then the corresponding $\deltaSAM^t$
\[
    \deltaSAM^t = \frac{z_\alpha^2}{2 n_t}
\]
decays at rate $1/n_t$. 
Because $n_t$ may grow much faster than the true effective sample size, this schedule can drive $\deltaSAM^t$ to zero prematurely, effectively nullifying any pessimism in the later stages of learning. 
This mirrors the Bernstein--von Mises posterior contraction, but may not model appropriately the epistemic uncertainty state of an actor trained in a reinforcement learning setting, where epistemic uncertainty persists due to partial observability, non-stationarity, or limited coverage of the state space. 
Fig.~\ref{fig:delta_alpha_n} illustrates this phenomenon: the fixed-$\alpha$ schedule quickly collapses the $\deltaSAM$ as $n_t$ grows, whereas fixing $\deltaSAM$ maintains a nontrivial degree of pessimism throughout training.

Fig.~\ref{fig:alpha_delta_pareto} and \ref{fig:alpha_delta_single} compare these two approaches to scheduling pessimism. 
Both strategies---fixing $\deltaSAM$ or fixing $\alpha$---enable \MethodName{} to outperform the vanilla baseline (\textit{PIDLag} in this case). Fig.~\ref{fig:alpha_delta_pareto} shows that both pessimism schedules expand the Pareto frontier of the vanilla algorithm enhancing safety-efficiency tradeoff across different $\lambda_{init}$ values. Fig.~\ref{fig:alpha_delta_single} compares the performance of \MethodName{} compared to the vanilla baseline (in grey) for different choices of $\deltaSAM$ or $\alpha$. We can see that there are many choices of fixed $\alpha$'s and $\deltaSAM$'s that lead to improvement over the vanilla \textit{PIDLag}, but the choice of $\alpha$ that leads to best performance is not that obvious as it impacts the exploration of the algorithm. 

\begin{figure}[t] % 'h' for placing figures near here in the text
  \vspace{-0.4cm}
  \centering
  \begin{minipage}{0.49\textwidth} % First figure, takes up 48% of the text width
    \centering
    \includegraphics[width=\linewidth]{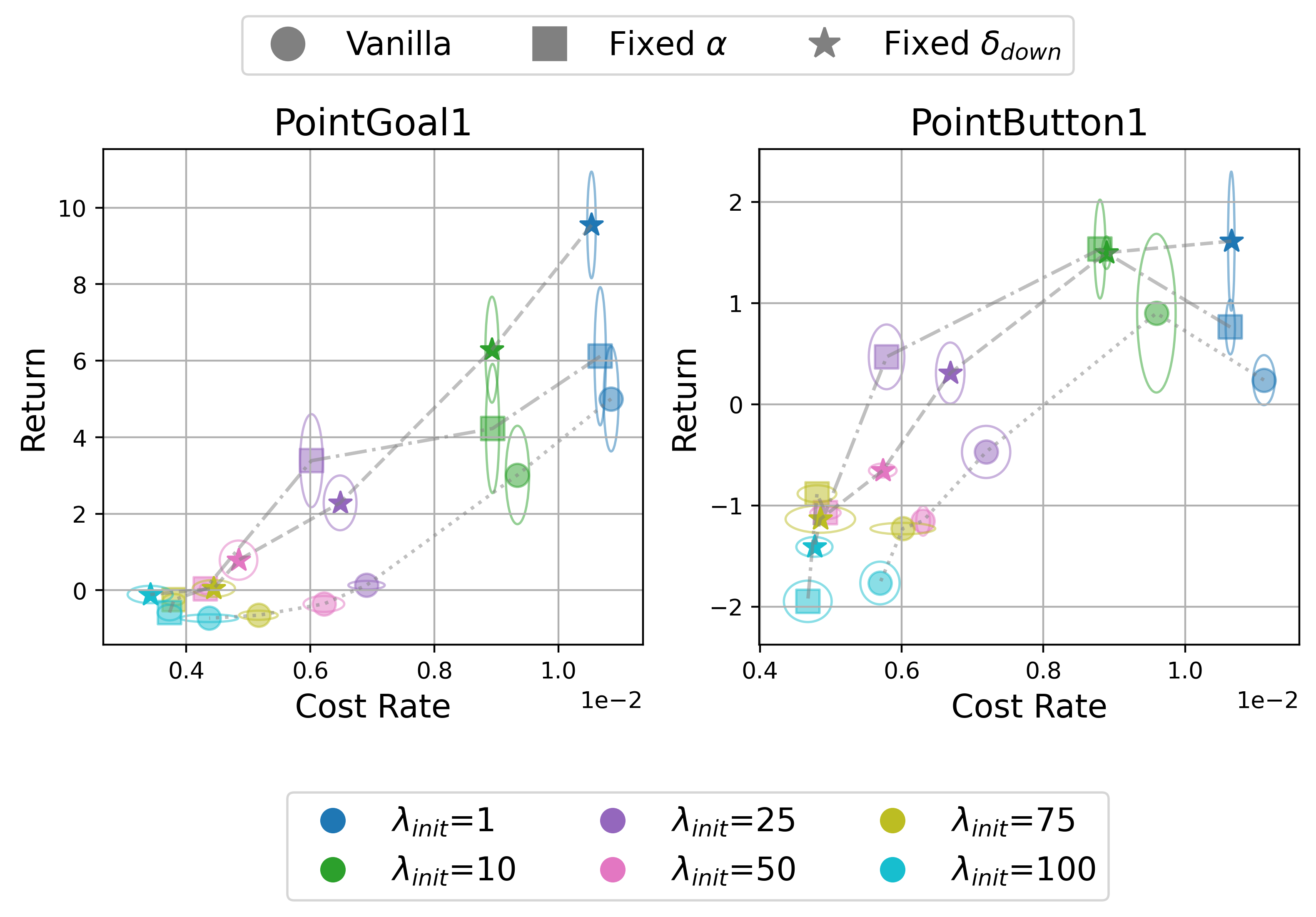}
    \vspace{-2em}
    \caption{\small{We compare two pessimism schedules for \MethodName{} by using a fixed $\deltaSAM$ throughout the training or calculating $\deltaSAM$ using $\deltaSAM = \frac{z_\alpha^2}{2n}$ for a fixed $\alpha$, where $n$ is assumed to be all the state-action pairs experienced by the agent. Both schedules expand the Pareto frontier of the Vanilla (\textit{PIDLag}) baseline.}}
    \label{fig:alpha_delta_pareto}
  \end{minipage}
  \hfill
  \begin{minipage}{0.49\textwidth} % Second figure, also takes up 48% of the text width
    \centering
    \vspace{-1em}
    \includegraphics[width=\linewidth]{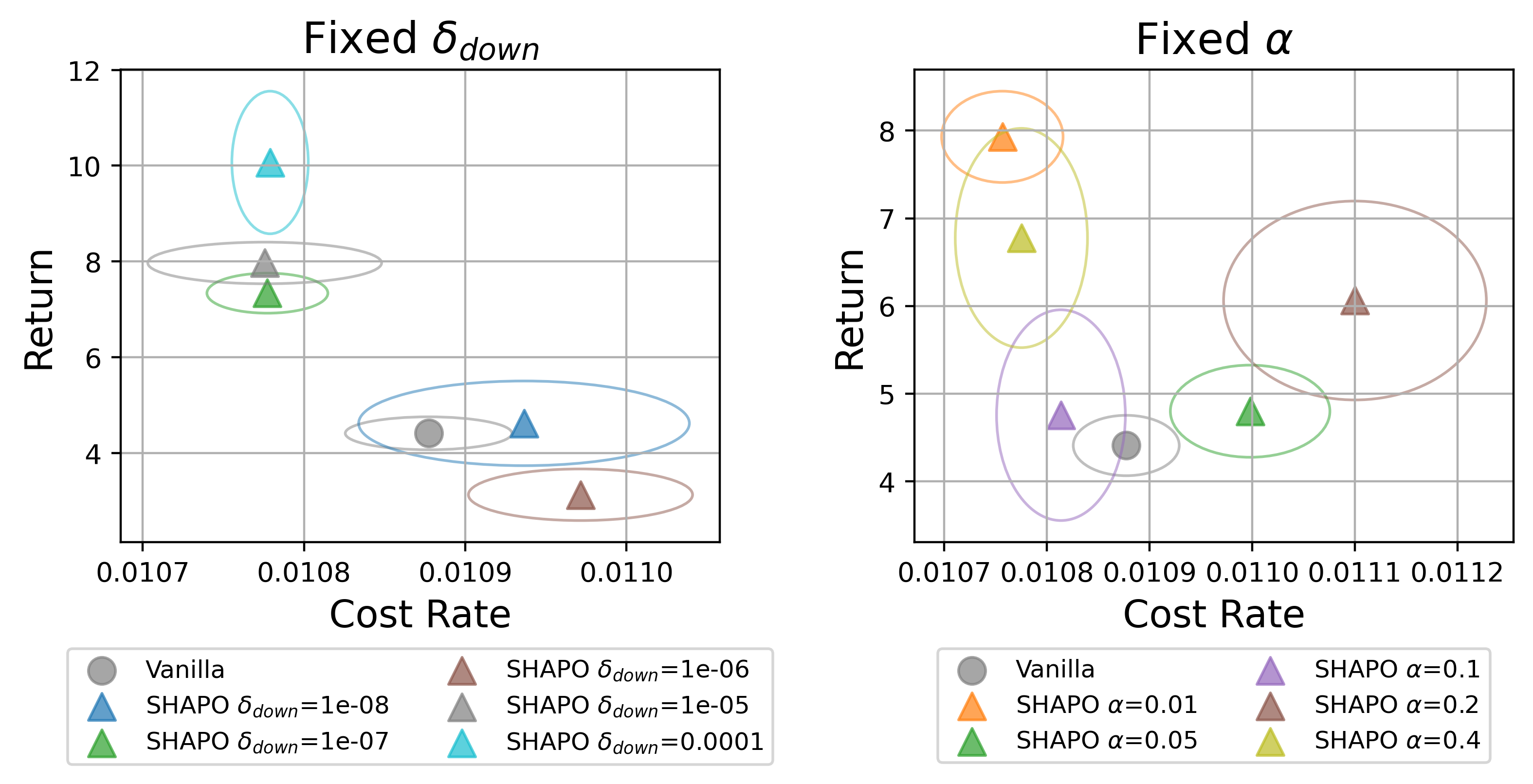}
    \vspace{-2em}
    \caption{\small{Both fixed $\deltaSAM$ and fixed $\alpha$ schedules lead to significant improvement in performance over the vanilla baseline on \textit{PointGoal1} task. Although the choice of $\alpha$ is not straightforward as both very low ($\alpha = 0.01$) and high alphas ($\alpha = 0.4$) give good performance.  }}
    \label{fig:alpha_delta_single}
  \end{minipage}
    % \vspace{-0.3cm}

\end{figure}

Since the effective sample size $n$ cannot be reliably estimated in continuous-control reinforcement learning---where data are temporally correlated and only a fraction of collected samples contribute independent information---fixing $\deltaSAM$ provides a practical and robust alternative. Treating $\deltaSAM$ as a tunable hyperparameter allows practitioners to directly control the desired safety--efficiency tradeoff without relying on uncertain estimates of $n$. This makes the fixed-$\deltaSAM$ strategy both simple and effective, offering a stable mechanism for regulating pessimism throughout training.

\end{revisionblock}

% \begin{figure}[t]
%     \centering
%     \includegraphics[width=0.9\linewidth]{iclr2026/figs/rebuttal/cvar_ensemble_shapo_comparison.png}
%     \caption{\MethodName{} improves the safety--efficiency tradeoff compared to value-based pessimism methods across a range of hyperparameters. For both WCPG (varying $\alpha$) and ensemble critics (varying $\beta$), increasing pessimism causes these baselines to become overly conservative, limiting exploration. In contrast, \MethodName{} regulates pessimism directly at the actor level, yielding safer and more effective exploration.}

%     \label{fig:dist_critic}
% \end{figure}

\begin{revisionblock}

\section{\MethodName{} for Unconstrained RL}

\begin{figure}[t]
    \centering
    \vspace{-1em}
    \includegraphics[width=\linewidth]{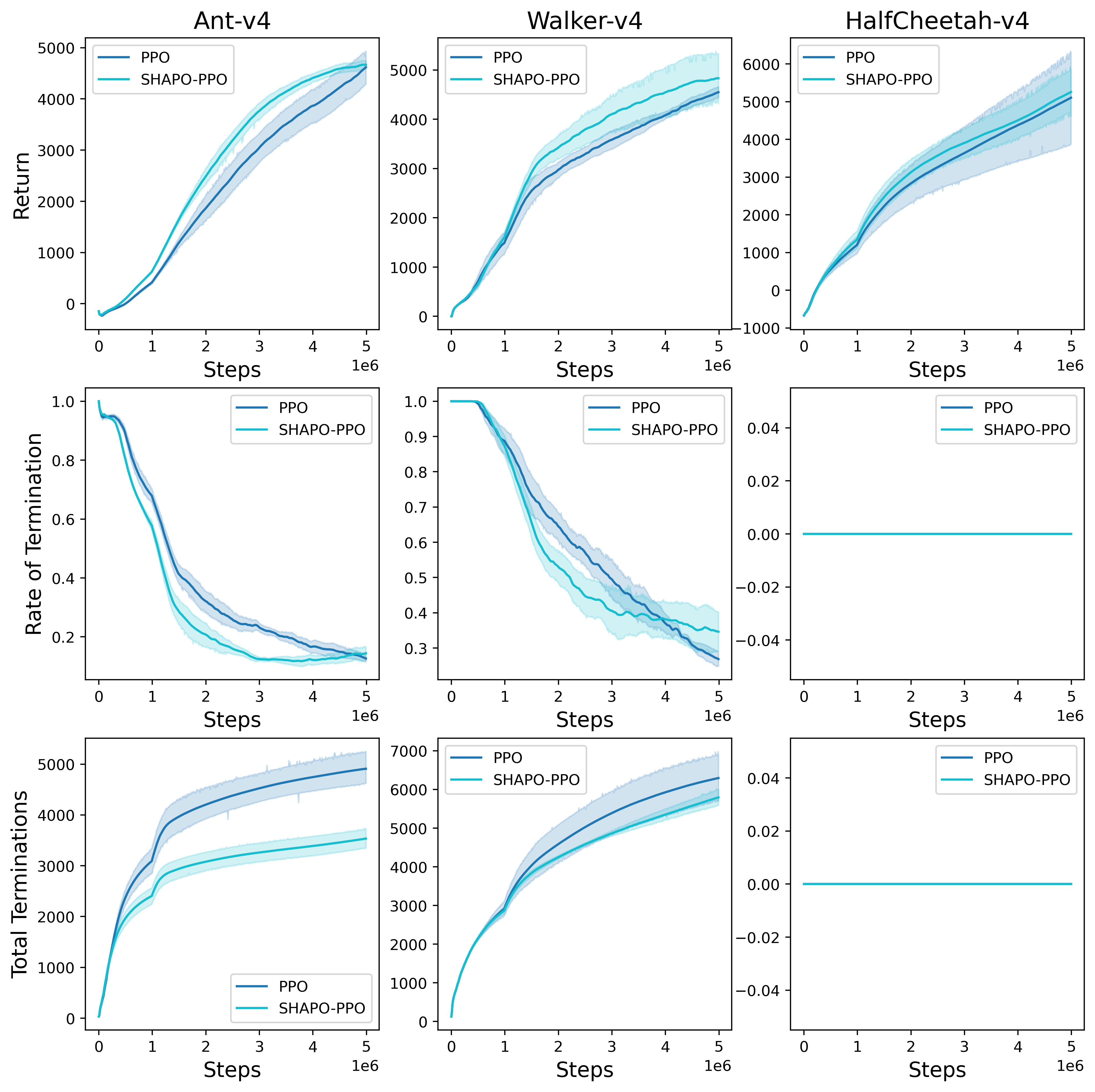}
    \vspace{-2em}
    \caption{\textit{\MethodName{} on Standard RL:} 
Columns correspond to \textit{Ant-v4}, \textit{Walker2d-v4}, and \textit{HalfCheetah-v4}. 
Top row: episodic return; middle row: rate of termination; bottom row: total terminations. 
\MethodName{} yields substantial gains on \textit{Ant-v4} and \textit{Walker2d-v4}, where falling triggers early episode termination and creates a pronounced lower tail in the return distribution. 
In \textit{HalfCheetah-v4}, which lacks termination or failure modes, the return distribution has no meaningful lower tail, and \MethodName{} consequently provides limited improvement.}

    \label{fig:unconstrained_rl}
\end{figure}

\MethodName{} is a general mechanism that can be incorporated into any on-policy reinforcement learning algorithm, and its utility extends beyond explicitly constrained RL settings. To illustrate this versatility, we evaluate \MethodName{} in three standard MuJoCo~\citep{mujoco} continuous-control environments: \textit{Ant-v4}, \textit{Walker2d-v4}, and \textit{HalfCheetah-v4}. The first two environments (\textit{Ant} and \textit{Walker2d}) include termination conditions in which the episode ends immediately if the agent falls. Although they do not provide an explicit cost signal or negative penalty for falling, this termination logic implicitly induces a lower tail of poor outcomes: each fall truncates the episode, prevents further reward collection, and therefore constitutes an undesirable event in the return distribution. In contrast, \textit{HalfCheetah} has no such termination or failure mode—the episode continues regardless of how the agent moves—yielding a much smoother return distribution without a pronounced lower tail. We compare PPO~\citep{ppo} to its \MethodName{}-augmented variant (\MethodName{}-PPO), implemented as described in Algorithm~\ref{alg:shapo-ppo}. From this perspective, \MethodName{}’s role becomes clear: it modulates the actor’s sensitivity to tail events in the loss landscape, whether these arise from explicit constraints or from inherent structural properties of the environment.

% Fig.~\ref{fig:unconstrained_rl} shows the impact of \MethodName{} reflects the presence or absence of such tail events. In \textit{Ant} and \textit{Walker2d}, \MethodName{} consistently improves the sample efficiency of the PPO baseline by reducing premature terminations; this allows agents to remain in the episode longer, resulting in fewer resets and more reward collected per rollout. In \textit{HalfCheetah}, however, where no catastrophic failure or termination mode exists, the return distribution lacks a meaningful lower tail, and accordingly, \MethodName{} offers limited additional benefit. These results underscore that while \MethodName{} provides tangible advantages in settings with significant tail risk, its effect is naturally less pronounced in environments where undesirable events are not distinctly represented in the return landscape.

Fig.~\ref{fig:unconstrained_rl} illustrates how the impact of \MethodName{} depends on the presence or absence of tail events in the environment. In \textit{Ant} and \textit{Walker2d}, \MethodName{} consistently improves the sample efficiency of the PPO baseline by reducing premature terminations; this allows agents to remain in the episode longer, resulting in fewer resets and more reward collected per rollout. In \textit{HalfCheetah}, however, where no catastrophic failure or termination mode exists, the return distribution lacks a meaningful lower tail, and accordingly, \MethodName{} offers limited additional benefit. These results underscore that while \MethodName{} provides tangible advantages in settings with significant tail risk, its effect is naturally less pronounced in environments where undesirable events do not produce clear tail behavior in the return landscape.

\end{revisionblock}

\section{LLM Usage}

ChatGPT~\citep{gpt_4} was used to polish writing and also as a tool to correct grammar and spelling mistakes.

% \newpage

\begin{figure}[t]
    \centering
    \vspace{-1em}
    \includegraphics[width=\linewidth]{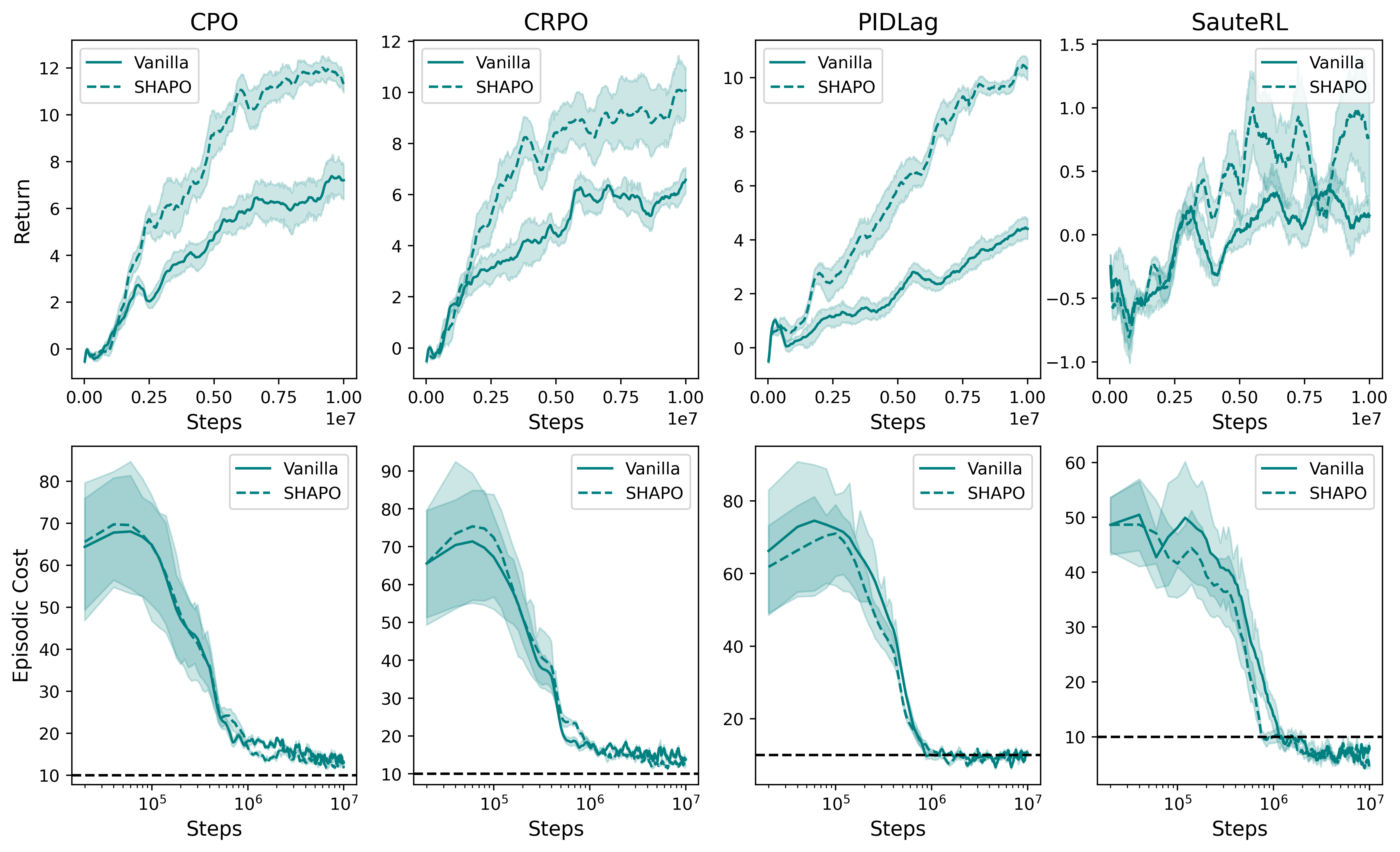}
    \vspace{-2em}
    \caption{\textit{PointGoal1}}
    \label{fig:pointgoal1}
\end{figure}

\begin{figure}[t]
    \centering
    \vspace{-1em}
    \includegraphics[width=\linewidth]{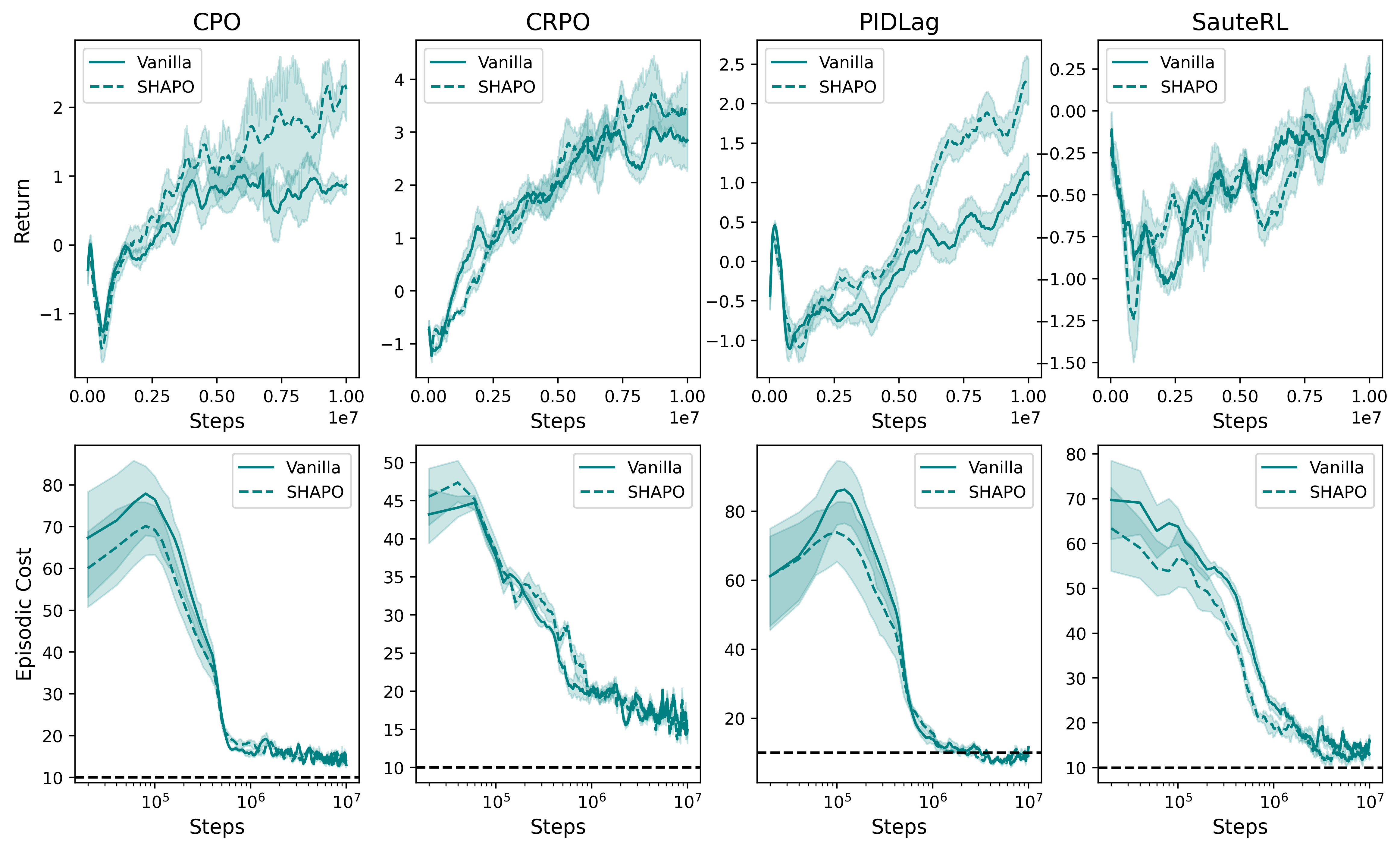}
    \vspace{-2em}
    \caption{\textit{PointButton1}}
    \label{fig:pointbutton1}
\end{figure}

\begin{figure}[t]
    \centering
    \vspace{-1em}
    \includegraphics[width=\linewidth]{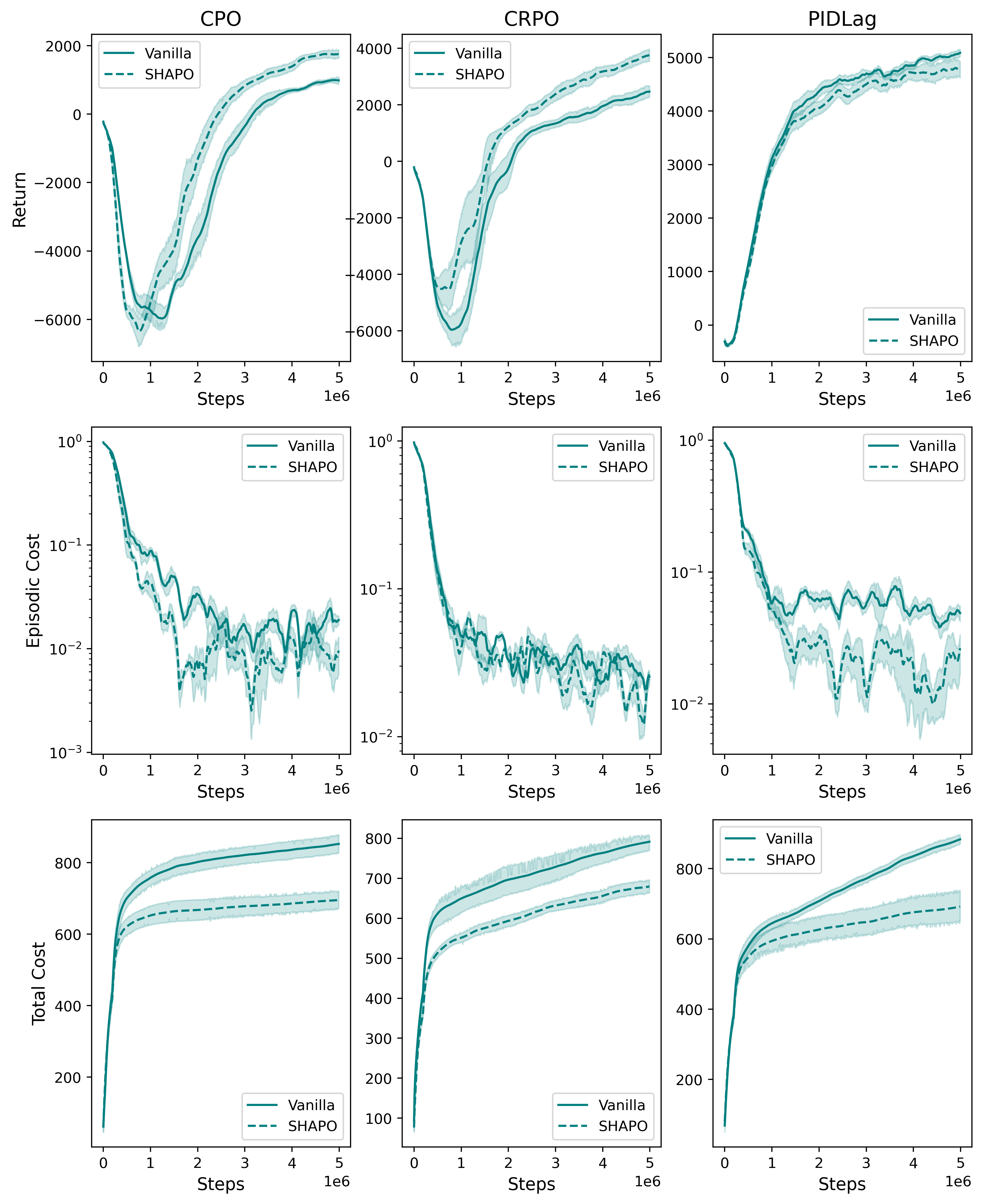}
    \vspace{-2em}
    \caption{\textit{Ant-v4}}
    \label{fig:ant}
\end{figure}

\begin{figure}[t]
    \centering
    \vspace{-1em}
    \includegraphics[width=\linewidth]{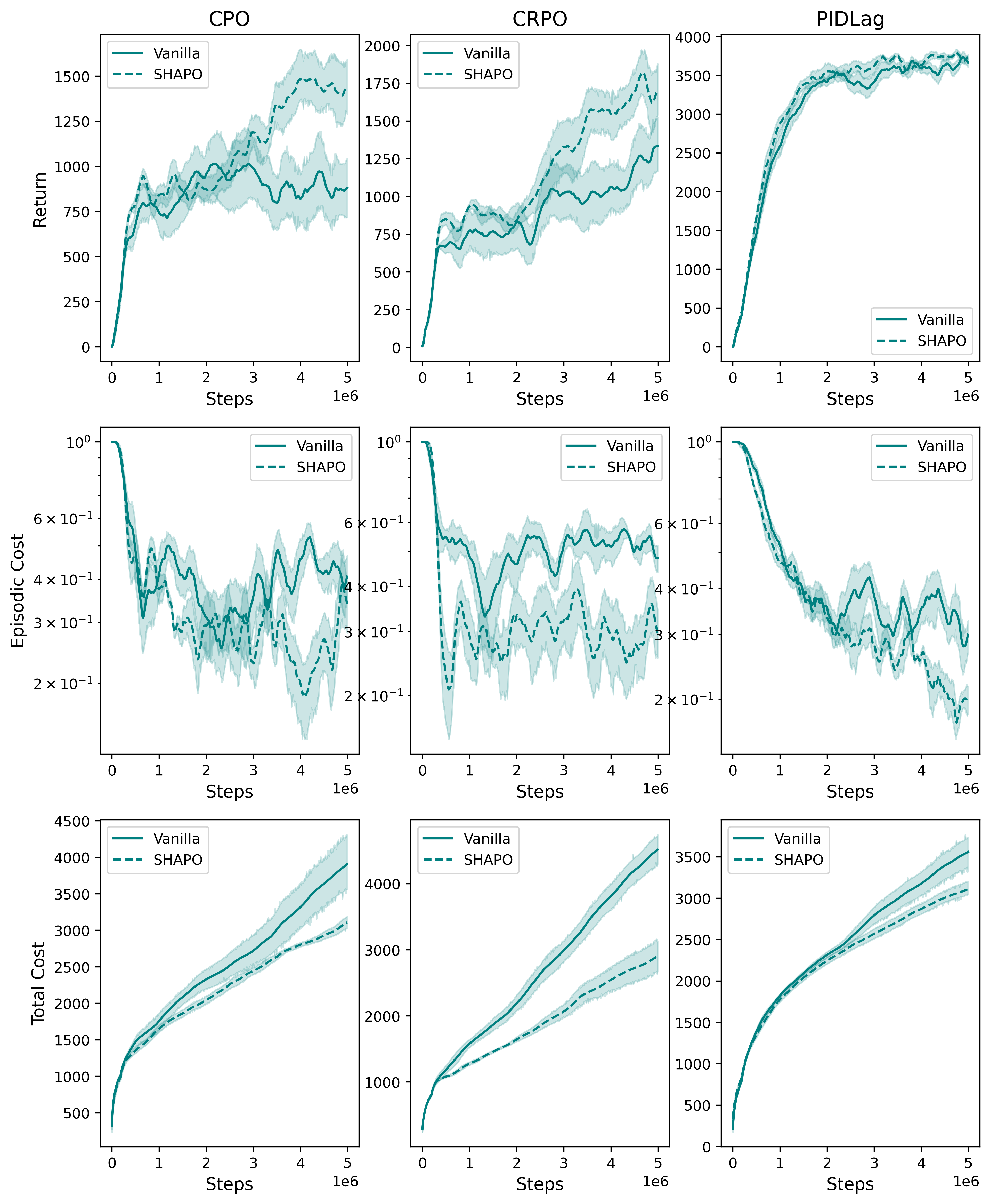}
    \vspace{-2em}
    \caption{\textit{Walker2d-v4}}
    \label{fig:walker}
\end{figure}